\documentclass{article}
\usepackage[T1]{fontenc}
\usepackage{lmodern}
\usepackage{mathtools}
\usepackage{amsmath}
\usepackage{amssymb}
\usepackage{xcolor}
\usepackage{amsthm}
\usepackage{float}
\usepackage{mathrsfs}
\usepackage{csquotes}
\usepackage{enumerate}
\usepackage{booktabs}
\usepackage{cancel}
\usepackage{tikz}
\usetikzlibrary{arrows.meta,positioning,calc}

\usepackage[ruled,vlined,linesnumbered]{algorithm2e}
\SetKwComment{Comment}{/* }{ */}
\usepackage{booktabs}
\usepackage{pdfpages}
\usepackage[normalem]{ulem}
\usepackage{url}
\usepackage{hyperref}
\usepackage{cleveref}
\usepackage[export]{adjustbox}
\usepackage{geometry}
\usepackage[disable]{todonotes}
\usepackage[english]{babel}

\usepackage{setspace}
\newcommand{\FundingLogos}{%
  \raisebox{0pt}{\includegraphics[height=1.5cm]{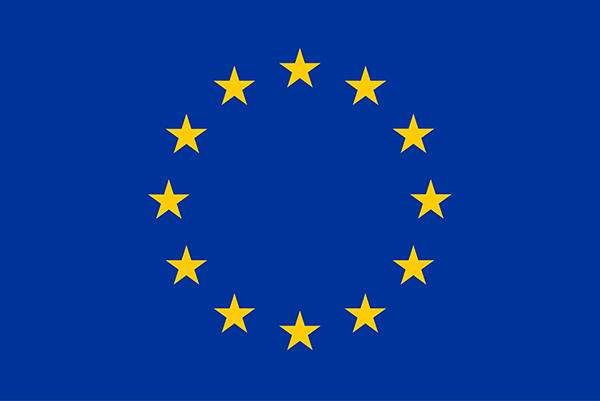}}%
  \hspace{1em}%
  \raisebox{0pt}{\includegraphics[height=1.5cm]{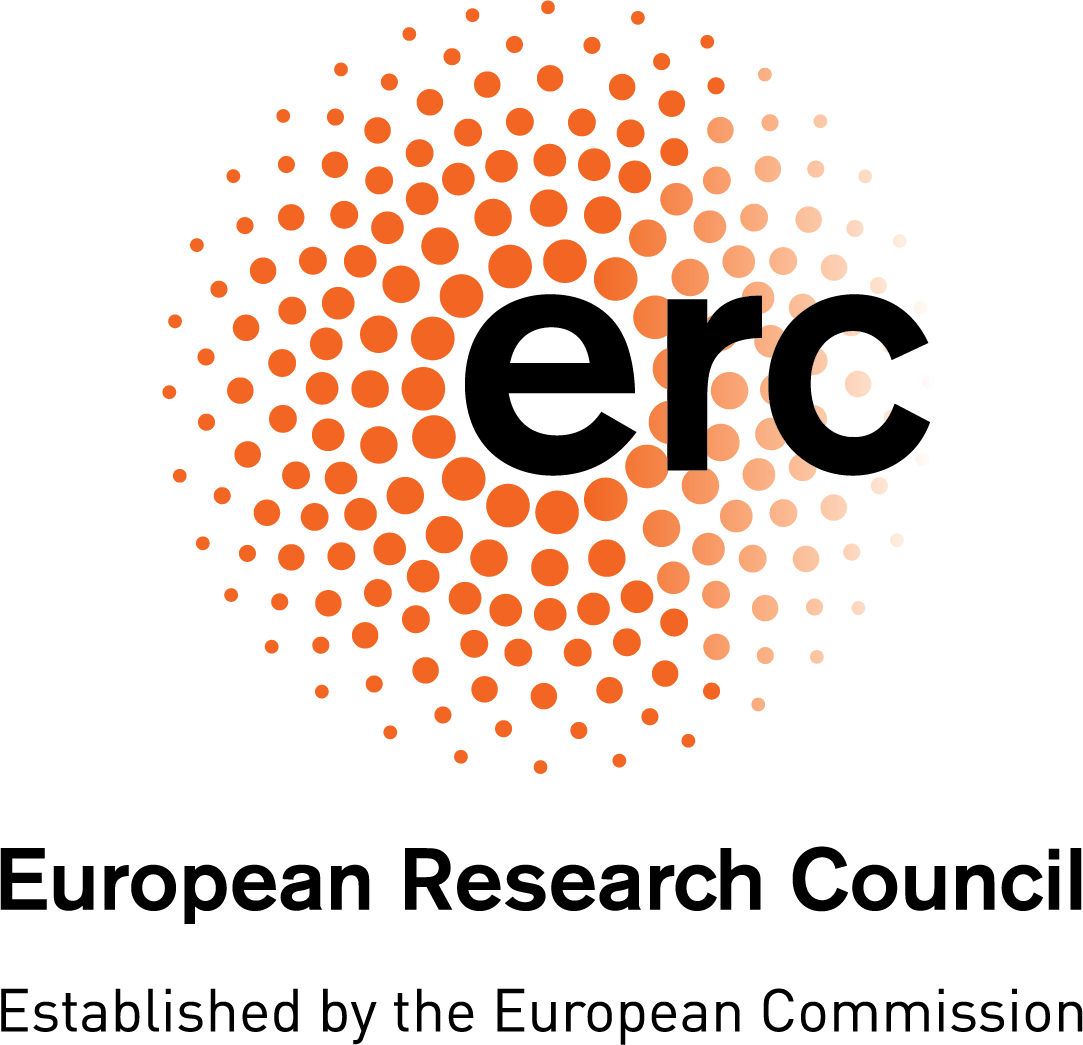}}%
}

\usepackage{tcolorbox}
\tcbset{colback=white,boxrule=0.5pt,arc=2pt,left=4pt,right=4pt,top=4pt,bottom=4pt}

\newcommand{\figref}[1]{Figure~\ref{#1}}

\newcounter{dummy} 
\numberwithin{dummy}{section}

\newtheorem{remark}[dummy]{Remark}

\newtheorem{example}[dummy]{Example}

\newtheorem{cor}[dummy]{Corollary}
\newtheorem{proposition}[dummy]{Proposition}

\usepackage{appendix}

\DeclareMathOperator{\R}{\mathbb R}
\DeclareMathOperator{\N}{\mathbb N}

\DeclareMathOperator{\E}{\mathbb E}
\DeclareMathOperator{\V}{\mathbb V}
\let\P\relax
\DeclareMathOperator{\P}{\mathcal P}
\DeclareMathOperator{\C}{\mathcal C}
\DeclareMathOperator{\F}{\mathcal F}

\DeclareMathOperator{\lin}{lin}
\DeclareMathOperator{\diag}{diag}

\DeclareMathOperator{\Attn}{Attn}
\DeclareMathOperator{\LN}{LN}

\DeclareMathOperator{\MLP}{MLP}

\renewcommand{\d}{\mathop{}\!\mathrm{d}}

\DeclareMathOperator{\tT}{\mathsf{T}}

\DeclareMathOperator{\tr}{tr}
\DeclareMathOperator{\Sym}{Sym}

\usepackage{dsfont}
\DeclareMathOperator{\1}{\mathds{1}}

\DeclareMathOperator{\SM}{SM} 

\newcommand{\hX}{{h_X}}
\newcommand{\hY}{{h_Y}}

\newcommand{\T}{\mathrm{T}}

\title{SympFormer: Accelerated attention blocks via \\Inertial Dynamics on Density Manifolds}
\author{
Viktor Stein%
    \thanks{Institute of Mathematics,
	   Technische Universität Berlin,
	   Stra{\ss}e des 17.\ Juni 136, 
	   10623 Berlin, Germany,
	   {\ttfamily \{stein,steidl\}@math.tu-berlin.de},
      \url{https://tu.berlin/imageanalysis}.
	}\ , 
Wuchen Li%
    \footnote{
    Department of Mathematics,
    University of South Carolina,
    Columbia.
    1523 Greene St, Columbia, SC 29208, USA, {\ttfamily wuchen@mailbox.sc.edu}.
    }\ ,
Gabriele Steidl\footnotemark[1]
}
\date{\today}

\begin{document}
\maketitle

\begin{abstract}
Transformers owe much of their empirical success in natural language processing to the self-attention blocks.
Recent perspectives interpret attention blocks as interacting particle systems, whose mean-field limits correspond to gradient flows of interaction energy functionals on probability density spaces equipped with Wasserstein-$2$-type metrics. 
We extend this viewpoint by introducing accelerated attention blocks derived from inertial Nesterov-type dynamics on density spaces. 
In our proposed architecture, tokens carry both spatial (feature) and velocity variables. The time discretization and the approximation of accelerated density dynamics yield Hamiltonian momentum attention blocks, which constitute the proposed accelerated attention architectures.
In particular, for linear self-attention, we show that the attention blocks approximate a Stein variational gradient flow, using a bilinear kernel, of a potential energy. 
In this setting, we prove that elliptically contoured probability distributions are preserved by the accelerated attention blocks.
We present implementable particle-based algorithms and demonstrate that the proposed accelerated attention blocks converge faster than the classical attention blocks while preserving the number of oracle calls.
\end{abstract}
\noindent\textbf{Keywords:} {Transformers; Self-attention block; Optimal transport; Nesterov’s accelerated gradient method; Density manifold; Kernel; Hamiltonian dynamics; Symplectic integrators.}

\section{Introduction}

Transformers \cite{Vaswani2017attention} underpin modern large-scale sequence models and have driven substantial progress in language processing, with large language models, such as ChatGPT and Gemini, becoming part of everyday tools. 
Despite their empirical success, the mathematical principles underlying the stability, scalability, and training efficiency of transformers remain only partially understood \cite{CACP2025,GLPR2024,Geshkovski2025Transformers}. 

The key novelty of the transformer architecture is the attention block, which
is defined by three linear projections of the token features: queries, keys, and values. 
Each token is updated as a weighted aggregation of value vectors with weights determined by the pairwise similarity between queries and keys by a softmax operator.

Recent developments suggest that transformers, and in particular attention blocks, can be interpreted as discrete-time interacting particle dynamics. Each layer update may be viewed as a time- and particle-discretization of non-linear evolution equations in probability density spaces over high-dimensional feature spaces for tokens. Under suitable choices of attention layers, the evolution of tokens resembles gradient descent of an interaction energy functional.
Consequently, the probability density function of the tokens follows the gradient flow of an energy functional in a probability space equipped with a Wasserstein-2-type metric, where the mobility function is induced by the softmax operator applied to query, key, and value vectors.

Gradient flows in Wasserstein spaces have been extensively studied in optimal transport theory, including convergence analysis and numerical schemes; for an overview, see, e.g. \cite{AGS2008,santambrogio2015optimal,O2001}. A classic example is the Fokker-Planck equation \cite{JKO1998}, which is the gradient flow of the free energy in Wasserstein-$2$ space. More recent works extend the Wasserstein spaces and gradient flows with general mobility functions \cite{CLSS2010,DNS2009}, which are also within the scope of dynamical density functional theory \cite{teVrugt02042020}. However, the analytical and computational properties of attention-induced mobilities and their associated interaction energies are still under investigation. Moreover, beyond the architecture's gradient-flow interpolation, training itself corresponds to an optimization procedure over the space of probability densities. These observations motivate a systematic mathematical model of transformers based on variational principles in probability density spaces.

From an optimization perspective, the Nesterov acceleration method~\cite{N1983,Nesterov2004} arises by introducing momentum into gradient descent. This method reduces oscillations and converts slow gradient motion into inertial motion, thereby improving the convergence rate of convex optimization in the discrete-time setting from $O(1/k)$ to $O(1/k^2)$. In continuous time, it corresponds to a second-order dynamical system related to heavy-ball or damped Hamiltonian dynamics~\cite{su2016differential,TM2019}. Such accelerated dynamics have recently been studied on density manifolds endowed with Wasserstein-$2$ type metrics, including accelerated Wasserstein gradient flows and accelerated Stein variational gradient flows \cite{SL2026,SL2026b}. 
Existing connections between transformers and interacting particle systems \cite{L2019} do not yet provide an inertial mechanism, except for the very recent work~\cite{ZPR2026}.
An acceleration principle arises from the geometry of the probability space governing token evolution. If standard transformers correspond to gradient flows on density manifolds, then accelerated transformers may be constructed from inertial dynamics on density manifolds. This observation motivates two natural questions: 
{\em What are ``accelerated attention blocks''
?} {\em Can this accelerated mechanism improve the training procedure, at least numerically?}

This work presents our initial ideas for formulating accelerated transformer architectures based on accelerated dynamics on density manifolds. We apply the fact that transformer layers can be interpreted as particle discretizations of Wasserstein-type gradient flows of interaction energies. We then derive a second-order extension, expressed in terms of momentum and Hamiltonian transformer blocks, in which tokens carry both feature and velocity variables. The resulting dynamics correspond to inertial evolution equations in the probability space. We consider it as an analog of Nesterov's continuous-time acceleration methods for constructing accelerated attention blocks. 

In the literature, several research directions have investigated the mathematical properties of transformers using optimal transport gradient flows. One is Sinkformer \cite{CACP2025}, which establishes a connection between transformers, the Sinkhorn algorithm, and PDEs in the density space.  Motivated by \cite{Peyre2015EntropicWGF, CACP2025,LI2020109449}, a transformer structure has been studied, which results in the regularized Wasserstein proximal operator for sampling algorithms \cite{HanOsherLi2025SparseTransformer,HanOsherLi2025SplittingRWP,TanOsherLi2024NoiseFree}. 
The other direction \cite{Geshkovski2025Transformers, GLPR2023} studies the classical transformer architecture from the perspective of generalized Wasserstein-2 gradient flows. Recently, \cite{CACP2025} investigated transformer PDEs as interacting particle systems. The transformer PDE itself is a gradient flow with respect to the Wasserstein-2 metric with non-linear mobility.
This paper follows the second direction by developing accelerated gradient flows and then designing accelerated attention blocks.
More closely to this paper, we became aware of recent work~\cite{ZPR2026}, which also addresses 
a Nesterov-style accelerated transformer that preserves the same attention and Multilayer perceptron (MLP) oracles.
 However, in contrast to our work, the authors of \cite{ZPR2026} study the ``classical'' Nesterov acceleration in Euclidean space, which 
does not consider the damped Hamiltonian dynamics in density manifolds. We included a numerical comparison with their approach in this paper, and show the advantage in terms of the cross-entropy loss. 

This paper is organized as follows. In Section \ref{sec:Nesterov}, we review Nesterov's acceleration method on Euclidean spaces and in probability density spaces, and geometric integrators for damped Hamiltonian systems.
Then, Section \ref{sec:trans} recalls the transformer architecture
and, in particular, the attention blocks from the perspective of a variational flow on a probability space.
We deal with linear attention layers in Section \ref{sec:lin-att}. We will show that the associated transformer PDE is the Stein gradient flow of a quadratic potential energy. 
This flow preserves elliptically contoured distributions. We provide a Lagrangian spatial discretization that yields a second-order-in-time inertial interacting particle system.
Section \ref{sec:soft-m}  deals with the practically more relevant softmax self-attention. Here, the
transformer PDE is a gradient flow in the softmax-self-attention-mobility Wasserstein space. We derive the
accelerated gradient flow and provide a second-order-in-time inertial interacting particle system. 
Finally, we demonstrate the performance of our accelerated transformer flows by numerical examples in
Section \ref{sec:numerics}.
All proofs and derivations are provided in Appendix \ref {sec:proofs}.
Implementation details are deferred to \Cref{section:Implementation}.

\section{Nesterov's Acceleration Method} \label{sec:Nesterov}
In this section, we briefly review Nesterov accelerated gradient flows in Euclidean space and in the space of probability densities equipped with the Wasserstein-$2$ metric or the Stein-Wasserstein metric. 

Nesterov acceleration~\cite{N1983} is a well-known technique for improving the convergence rate of gradient descent methods. Consider the optimization problem:
\begin{equation}\label{problem}
\min_{x \in \mathbb R^d} F(x),
\end{equation}
where $F\in C^1(\R^d;\R)$ is an objective function. The gradient descent sequence starting at $x_0 \in \R^d$ is
$$x^{(k+1)} = x^{(k)} - \tau \nabla F(x^{(k)}), \qquad x^{(0)} = x_0,$$ 
where $k\in\mathbb{N}$ is the iteration step and $\tau>0$ is a step size. Nesterov's accelerated gradient descent is obtained by choosing an extrapolation coefficient, as in the left part of \eqref{low-c}.
Substituting $t= k \sqrt{\tau}$ and letting $\tau \to 0$, Nesterov’s method can be interpreted through a continuous-time dynamical system~\cite{su2016differential}:
\begin{equation} \label{low-c}
\begin{cases}
x^{(k + 1)} = y^{(k)} - \tau \nabla F(y^{(k)}),  
\\[2mm]
y^{(k + 1)} = x^{(k + 1)} + (1 - \frac{3}{k+3}) (x^{(k + 1)}-x^{(k)}), \\ 
x^{(0)} = x_0, \ y^{(0)} = x_0.
\end{cases}
\qquad
\begin{cases}
\ddot{x}(t) + \dfrac{3}{t} \dot{x}(t) + \nabla F(x(t)) = 0,  
\\[2mm]
x(0) = x_0,\ \dot{x}(0) = 0.
\end{cases}
\end{equation}
For a convex function $F$ and any step size $\tau \in (0,\frac1L)$, 
where $L$ is the Lipschitz constant of $\nabla F$, the discrete scheme exhibits a convergence rate 
$F(x^{(k)}) - F^* \le \mathcal O(\frac{\|x_0-x^*\|}{\tau k^2})$, 
where $x^*$ is a minimizer of $F$ and $F^* = F(x^*)$.
This rate is optimal among all methods having only information about the gradient of $F$ at consecutive iterates \cite{Nesterov2004}. 
For the continuous dynamics, we have similarly
$F(x(t)) - F^* \le \mathcal O(\frac{\|x_0-x^*\|}{t^2})$.
Introducing the momentum variable $p_t$, the second-order ODE \eqref{low-c} can also be rewritten as the first-order system
\begin{equation} \label{first_order}
\dot x_t = p_t, \quad \dot p_t = -\alpha_t p_t - \nabla F(x_t), \qquad x_t|_{t = 0} = x_0, \ p_0 = 0,
\end{equation}
where $\alpha_t \coloneqq \frac3t$. 
Interestingly, it was observed in \cite{TM2019} that with the Hamiltonian function and its derivatives
$$H(x,p) = \frac12 \|p\|^2 + F(x), 
\quad \nabla_p H(x,p) = p, 
\quad \nabla_x H(x,p) = \nabla F(x),$$
the first-order system \eqref{first_order} becomes a damped Hamiltonian flow
\begin{equation} \label{hamil}
\begin{pmatrix}
\dot x_t\\
\dot p_t
\end{pmatrix}
+
\begin{pmatrix}
0\\
\alpha_t p_t
\end{pmatrix}
- 
\left(
\begin{array}{rr}
0&I_d\\
-I_d&0
\end{array}
\right)
\begin{pmatrix}
\nabla_x H(x_t,p_t)\\
\nabla_p H(x_t,p_t)
\end{pmatrix} = 0,
\qquad x_t|_{t = 0} = x_0, \ p_0 = 0.
\end{equation}

In \cite{WL2022}, the damped Hamiltonian flow \eqref{hamil} was directly generalized for accelerating gradient flows on the probability density space.
Let $\Omega =\R^d$ 
and
$$
\mathcal P(\Omega) \coloneqq \{\rho \in C_{>0}^\infty(\Omega): \int_\Omega \rho(x) \, \d x = 1 \}.
$$
For densities $\rho \in \mathcal P(\Omega)$, we denote the tangent space at $\rho$ by  
$$
T_\rho \mathcal P(\Omega) = \{\sigma \in C^\infty(\Omega) : \int_{\Omega} \sigma\d x = 0\},
$$ 
and its cotangent space by 
$T_\rho^*\mathcal P(\Omega) \cong C^\infty(\Omega)/\R$.
For any density function $\rho\in\P(\Omega)$, a metric tensor is an invertible mapping operator $G_\rho: T_\rho \mathcal P(\Omega) \to T_\rho^*\mathcal P(\Omega)$, which defines a metric on $T_\rho (\Omega)$  by
\begin{equation*}
\langle \sigma_1,\sigma_2 \rangle_{\rho} 
\coloneqq
\int_{\Omega}  \sigma_1 G_{\rho} [\sigma_2] \, \d x = 
\int_{\Omega}  \Phi_1 G_{\rho}^{-1} [\Phi_2] \, \d x,
\end{equation*}
where $\sigma_i = G_{\rho}^{-1} \Phi_i$, $i=1,2$ with $\sigma_i\in T_\rho\P(\Omega)$ and $\Phi_i\in T^*_\rho\P(\Omega)$. In this paper, we are mainly interested in the following metrics: 
\begin{align}
\text{Wasserstein}:&\quad (G_\rho^{W})^{-1} \Phi \coloneqq  - \nabla \cdot (\rho \nabla \Phi),\notag\\
\text{Stein}:& \quad (G_\rho^{S})^{-1} \Phi 
\coloneqq  - \nabla \cdot \left(\rho \int_\Omega k(\cdot,y) \, \nabla \Phi(y) \, \rho(y) \, \d y\right), \label{steingf}
\end{align}
 where $k \colon \Omega \times \Omega \time \Omega \to \mathbb R$ is a symmetric, integrally strictly positive definite kernel with $\int_\Omega k(x,x) \d x < \infty$.
Then, we study the minimization problem of a functional $\mathcal{F}\colon\P(\Omega)\to\R$ in the probability density space: 
\begin{equation*}
\min_{\rho \in \mathcal P(\Omega)} \mathcal F(\rho).\label{eq:F}
\end{equation*}
The gradient flow of energy functional $\mathcal{F}(\rho)$ in the probability density space  $(\P(\Omega), G)$ satisfies 
\begin{equation} \label{gradd}
\partial_t \rho_t =  - G_{\rho_t}^{-1} \Big[\frac{\delta \mathcal F(\rho_t)}{\delta \rho_t} \Big].
\end{equation}
Here,
$
\frac{\delta \mathcal F(\rho_t)}{\delta \rho_t} \in T^*_\rho \mathcal P(\Omega)
$ 
denotes the $L^2$ first variation of functional $\mathcal{F}(\rho)$ at the density $\rho$ defined by
 $$\int_\Omega  \frac{\delta \mathcal F(\rho_t)}{\delta \rho_t}   \varphi \d x
 = \lim_{\varepsilon \to 0} \frac{1}{\varepsilon} \left( \mathcal F(\rho + \epsilon \varphi) - \mathcal{F}(\rho) \right),$$ 
for all $\varphi \in C^\infty(\Omega)$ with mass zero for which 
$\rho + \epsilon \varphi \in \mathcal P(\Omega)$.
Then the acceleration dynamics \eqref{hamil} in $(\P(\Omega), G)$, suggested in \cite{WL2022}, satisfies 
\begin{equation} \label{sys}
\partial_t
\begin{pmatrix}
\rho_t\\
\Phi_t
\end{pmatrix}
+
\begin{pmatrix}
0\\
\alpha_t \Phi_t
\end{pmatrix}
- 
\begin{pmatrix}
0&I\\
-I&0
\end{pmatrix}
\begin{pmatrix}
\frac{\delta}{\delta \rho_t}  \mathcal{H}(\rho_t,\Phi_t)\\[0.5ex]
\frac{\delta}{\delta \Phi_t}  \mathcal{H}(\rho_t,\Phi_t)
\end{pmatrix} = 0,
\qquad \rho_t|_{t = 0} = \rho_0, \ \Phi_0 = 0,
\end{equation}
where $\rho_0\in\P(\Omega)$ is a given initial density function, $I$ denotes the identity operator on 
$T^*_\rho \mathcal P(\Omega)$,
and the Hamiltonian functional $\mathcal{H}\colon P(\Omega)\times T_\rho^*\P(\Omega)\to\R$ satisfies 
\begin{equation} \label{hamilt}
 \mathcal{H}(\rho_t,\Phi_t) \coloneqq \frac12 \int_\Omega \Phi_t G_{\rho_t}^{-1} [\Phi_t] \, \d x + \mathcal F(\rho_t).
\end{equation}
Indeed, this accelerated gradient method
relies on two ingredients, namely the functional $\mathcal F$ and the metric tensor $G_\rho$. The system \eqref{sys} can be rewritten as
\begin{equation}\label{sys_1}
\begin{cases}
 \partial_t \rho_t - G^{-1}_{\rho_t}   [\Phi_t] = 0, & \rho_t|_{t = 0}=\rho_0, \\[1ex]
 \partial_t \Phi_t + \alpha_t \Phi_t + \frac12 \frac{\delta}{\delta \rho_t}
 \left(\int_\Omega \Phi_t G^{-1}_{\rho_t}[\Phi_t]  \d x \right)  + \frac{\delta \mathcal F}{\delta \rho_t} = 0, & \Phi_0 = 0.
 \end{cases}
\end{equation} 
For example, the accelerated gradient flow \eqref{sys_1} in these density manifolds becomes 
\begin{align}
\text{Wasserstein:} \quad &
\begin{cases}
\partial_t \rho_t + \nabla \cdot \left(\rho_t \nabla  \Phi_t  \right) = 0, & \rho_t|_{t = 0} = \rho_0, \\[1ex]
\partial_t \Phi_t + \alpha_t  \Phi_t + \frac12 \| \nabla \Phi_t\|_2^2 + \frac{\delta \mathcal F(\rho_t)}{\delta \rho_t} = 0, & \Phi_0 = 0.
\end{cases}
\\
\text{Stein:} \quad &
\begin{cases}
\partial_t \rho_t + \nabla \cdot \Big( \rho_t(x) \int_{\Omega} k(x,y) \rho_t(y) \nabla \Phi_t(y) \, \d y \Big) = 0, & \rho_t|_{t = 0} = \rho_0, \\[1ex]
\partial_t \Phi_t + \alpha_t  \Phi_t + \int_\Omega\nabla \Phi_t(x)^\T \nabla \Phi_t(y) \,  k(x,y) \, \rho_t(x) \, \d x + \frac{\delta \mathcal F(\rho_t)}{\delta \rho_t} = 0, \; \; &\Phi_0 = 0.
\end{cases}
\end{align}
This formalism is only rigorous on compact manifolds; we treat the case of $\Omega = \R^d$ only formally.
A fruitful avenue for future research is to model layer normalization as a projection onto the sphere, thereby placing the dynamics on a compact manifold.

We will discretize certain accelerated gradient flows by approximating the density $\rho_t$ by the empirical measure 
$\frac{1}{N} \sum_{j = 1}^{N} \delta_{X_i(t)}$ and setting $Y_i(t) \coloneqq \nabla \Phi_t(X_i(t))$, yielding a finite-dimensional linearly damped Hamiltonian system in Euclidean space \eqref{hamil}.
Later, we will leverage that \eqref{hamil} can be rewritten as the \textit{undamped} Hamiltonian system
\begin{equation} \label{eq:time_dependent_Hamiltonian_system}
    \begin{cases}
        \dot{x}(t)
        = \partial_P \tilde{H}\big(t, x(t), P(t)\big), & x(0) = x_0, \\
        \dot{P}(t)
        = - \partial_q \tilde{H}\big(t, x(t), P(t)\big), & P(0) = 0,
    \end{cases}
\end{equation}
with the \textit{time-dependent} Hamiltonian
\begin{equation*}
    \tilde{H}
    (t, q, P) \coloneqq e^{\eta(t)} H(q, e^{- \eta(t)} P), \qquad t \ge t_0,
\end{equation*}
using the change of variables
\begin{equation} \label{eq:change_of_variables}
    P = e^{\eta(t)} p,   
\end{equation}
see, e.g., \cite[App.~B]{FJV2021} and also \cite{WWJ2016}.

\paragraph{Time Discretizations of Linearly Damped Hamiltonian Systems} \label{subsec:time_discretizations}

Let $\eta \colon [t_0, \infty) \to \R$, $t \mapsto \int_{t_0}^{t} \alpha(s) \d{s}$ be a well-defined antiderivative for $\alpha$, for some fixed $t_0 \ge 0$.
For example, if $\alpha(t) \equiv \mu > 0$, then we can pick $t_0 = 0$ and then $\eta(t) = \mu t$, whereas if $\alpha(t) = \frac{3}{t}$, then we will choose $t_0 > 0$ and obtain $\eta(t) = 3 \ln\left(t / t_0 \right)$.

Further, let $h_k > 0$ denote the step sizes.
We use the time grid $t_{k + 1} \coloneqq t_k + h_k$ for $k \in \N$ for some $t_0 \ge 0$ and denote by $\sigma_k \coloneqq \exp\left(-\int_{t_k}^{t_{k + 1}} \alpha(s) \d{s}\right)$ the discrete damping coefficient for $k \in \N$, which mimics the continuous-time dissipation \cite{AA2024}.
For common damping schedules $\alpha$, the damping coefficient is available in closed form.

\paragraph{Explicit Euler method}
    The most simple time discretization of \eqref{hamil} is a \textit{plain explicit forward Euler} discretization with step size $\tau > 0$, which does not take any geometry into account.
    Its $k$-th iteration is
    \begin{equation} \label{eq:plain_explicit_Euler}
        \begin{cases}
            x^{(k + 1)}
            = x^{(k)} + h_k \nabla_p H(x^{(k)}, p^{(k)}), \\
            p^{(k + 1)}
            = \alpha_k p^{(k)} - h_k \nabla_x H(x^{(k)},p^{(k)}),
        \end{cases}
    \end{equation}
    where $\alpha_k = \frac{k - 1}{k + 2}$ or $\alpha_k$ is constant.
    This Nesterov-style discretization is not conformally symplectic \cite{FJV2021,FSRV2020}.

    \paragraph{Conformally symplectic Euler method}
    A geometric approach \cite{GH2025,BJW2018} is to split the vector field $(\partial_p H, - \alpha p - \partial_q H)$ driving the evolution \eqref{hamil} into its conservative part $(\partial_p H, - \partial_q H)$ and the dissipative part $(0, - \alpha p)$.
    We apply a Lie-Trotter split: integrate the dissipative part exactly and use a symplectic integrator for the conservative part.
    This is known as the \textit{conformally symplectic Euler} or \textit{\enquote{kick then damp} scheme}, which reads
    \begin{equation} \label{eq:presymp_Euler} 
        \begin{cases}
            p^{(k + 1)}
            = \sigma_k \left( p^{(k)} - h_k \nabla_x H(x^{(k)}, p^{(k)}) \right), \\
            x^{(k + 1)}
            = x^{(k)} + h_k \nabla_p H(x^{(k)}, p^{(k + 1)}).
        \end{cases}
    \end{equation}
    
\paragraph{Exponential Euler method}
    For a simple Hamiltonian, whose kinetic energy term is just $\frac{1}{2} \| p \|^2$, the dynamics \eqref{hamil} can be written as $\dot{p} + \alpha p - \nabla_x H(x, p) = 0$.
    Hence, we can use the \textit{exponential Euler method} \cite{HOS209} applied to this simplified evolution to approximately discretize \eqref{hamil} for general Hamiltonians.
    The exponential Euler update for fixed step size $h_k = h > 0$ reads as follows:
    \begin{equation} \label{eq:Presympl_exp_euler}
        \begin{cases}
        p^{(k + 1)}
        = \sigma_k p^{(k)}
        - h \int_{0}^{h} \exp\left( - \int_{s}^{h} \alpha(z) \d{z}\right) \d{s} \nabla_x H(x^{(k)}, p^{(k)}), \\
        x^{(k + 1)}
        = x^{(k)} + h \nabla_p H(x^{(k)}, p^{(k)}).
        \end{cases}
    \end{equation}
    For $\alpha(s) = \frac{r}{s}$, the weight in front of $\nabla_x H$ in the first update equation becomes $\frac{h}{r + 1}$ and for $\alpha(s) = m$, it becomes $\frac{1 - e^{- m h}}{m}$.
    For the log-linear damping $\alpha(s) = \frac{r}{s} + m$, it can be expressed in terms of the confluent hypergeometric function ${}_1 F_1$.\\[3mm]

For the accelerated transformer architecture presented in \Cref{sec:lin-att,sec:soft-m}, we need time discretizations that require a small number of function evaluations (also called oracle calls).
First-order time-discretizations evaluate the function only once per iteration.
Multi-step methods can achieve higher orders of accuracy than first-order methods by reusing function values from previous iterations, thereby preserving the oracle budget.  
In contrast, higher-order single-step methods like Runge-Kutta-4 need four oracle calls per step.

\paragraph{Adams-Bashforth (AB-2) method}
The two-step Adams-Bashforth (AB-2) \cite{B1883} is an explicit linear multistep method of \textit{order two} -- the best order achievable by an explicit two-step multistep method \cite[Chp.~3]{HNW93}.
We can write \eqref{hamil} abstractly as an ODE of the form $y'(t) = f(t, y(t))$, where $y = (x, p)$. The AB-2 method then is
\begin{equation} \label{eq:AB2}
    y^{(k + 1)} = y^{(k)} + h_k \left( \frac{2 h_{k - 1} + h_k}{2 h_{k - 1}} f(t^{(k)}, y^{(k)}) - \frac{h_k}{2 h_{k - 1}} f(t^{(k - 1)},y^{(k - 1)})\right).
\end{equation}
Here $h_{k}>0$ is a stepsize.  To obtain the value $y^{(1)}$ from the given initial value $y^{(0)}$, we perform an explicit Euler step: $y^{(1)} = y^{(0)} + h_0 f(t_0, y^{(0)})$.

\section{Transformers} \label{sec:trans}
In this section, we briefly review transformers from the viewpoint of PDEs on the space of probability densities; for details, see, e.g. \cite{CACP2025,Geshkovski2025Transformers}. 

Transformers modify token configurations $(X_1,\ldots, X_N) \in (\mathbb R^d)^N$ using two essential components, namely self-attention layers, which combine information across tokens, and MLP layers, which perform independent non-linear transforms on each token. Let
$Q, K \in \R^{m\times d}$ and $V \in \R^{d\times d}$, $m\leq d$,  be query, key, and value matrices, which usually differ in each layer of the transformer, 
and set $A \coloneqq K^\T Q \in \R^{d\times d}$. 
We assume that $A$ is a symmetric matrix.
In this paper, we are interested in functions $\kappa = \kappa_A: \R^d \times \R^d \to \R$ in one of the following forms:
\begin{equation} \label{eq:self_attn_kernel}
\kappa(X_i,X_j) = 
\left\{
\begin{array}{ll}
   \frac{\exp\left((X_j)^\T A X_i \right)}{\sum_{\ell=1}^N \exp\left((X_{\ell})^\T A X_i\right)}
   =
   \text{softmax}_j\big((X_\ell^{\tT} A X_i)_{\ell = 1}^{N}\big) &\quad  \text{softmax self-attention},\\[3ex]
 (X_j)^\T A X_i &\quad \text{linear self-attention}.
\end{array}
\right.
\end{equation}
Recall that the softmax function is defined by 
$
\text{softmax} (w) \coloneqq \big(\exp(w_i) /\sum_{j=1}^d \exp(w_j) \big)_{i=1}^d
$, for $w\in\R^d$. Then, one layer of a transformer consists of the sublayers: 
\\[1ex]
\textbf{Self-attention layer}:
\begin{equation} \label{eq:self-attention}
X^{(k + 1)}_i = X^{(k)}_i +h \sum_{j=1}^N \kappa(X^{(k)}_i,X^{(k)}_j) VX^{(k)}_j.
\end{equation}
\\
\textbf{MLP layer}:
\begin{equation} \label{eq:MLP}
    X^{(k + 1)}_i= X^{(k)}_i +  hW \sigma (\tilde W X^{(k)}_i + b),     
\end{equation}
where we denote the input tokens as $(X^0_1,\cdots, X^0_N)$, $h>0$ is a step size, $k\in\mathbb{N}$ is the depth of the layer, $W,\tilde W\in\R^{d\times d}$ are the weight matrices, and the bias vector $b\in\R^d$ differs in general in each layer, 
and $\sigma\colon \R\to\R$ is a non-linear function applied componentwise. 
This description assumes that the matrices $Q$, $K$, and $V$ are the same in each layer (\enquote{weight sharing}), which is the case for some transformer architectures, such as ALBERT \cite{Lan2020}.

We next present an ODE system only for the attention layer described above. As the step size $h\to 0$, we obtain the following ODE system. Given input tokens 
$\mathbf{X}(0) = (X_1(0), \dots, X_N(0)) \in (\R^d)^N$, their evolution through an attention layer can be modeled as a dynamical system
\begin{equation}\label{eq:attention}
\dot{X}_i(t) = \Gamma_{X(t)}(X_i(t)), 
\qquad 1 \le i \le N,
\end{equation}
where $\Gamma_{X}\colon \R^d\to\R^{d}$ denotes the attention vector field determined by the query, key, and value matrices $Q$, $K$, $V$ for the current total tokens $X(t)$. 
In the case of softmax self-attention, the vector field takes the form: 
\begin{equation*}
 \Gamma_{X}(x)\coloneqq 
\sum_{j=1}^N
\frac{\exp\big(Qx \cdot KX_j\big)}
{\sum_{\ell=1}^N \exp\big(Qx \cdot KX_\ell\big)}
\, VX_j,\qquad  \textrm{for }x\in\R^d.
\end{equation*}
Each token in \eqref{eq:attention} evolves as a similarity-weighted average of the value vectors, which follows a non-linear interacting particle system.

By \cite{CACP2025,Geshkovski2025Transformers}, see also  \cite{BKKRW2025,GLPR2024,SABP2022,ZZCD2022}, the mean-field limit of the interacting particle system \eqref{eq:attention} with only self-attention blocks can be expressed for $\Omega = \R^d$ 
as
\begin{equation} \label{eq:transformer_PDE}
    \partial_t \rho_t
    = - \nabla \cdot (\rho_t \Gamma_{\rho_t}), \qquad t > 0,
\end{equation}
where the velocity field is a probability density dependent vector function satisfying:
\begin{equation*}
    \Gamma_{\rho} \colon \R^d \to \R^d, \qquad
    x \mapsto \int_{} V y \, \kappa_{\rho}(x, y) \rho(y)  \d y.
\end{equation*}
In this paper, we are interested in the following two attention layers: 
\begin{equation*}
    \kappa_{\rho}(x, y)
    \coloneqq \begin{cases} 
    \frac{\exp(y^\T A x)}{\int_{} \exp(z^\T A x) \rho(z) \d z} & \text{softmax self-attention},
    \\[2ex]
    y^\T A x, & \text{linear self-attention}.
    \end{cases}
\end{equation*}
As shown for accelerating the gradient flow \eqref{gradd} in the previous section,
we propose to accelerate the flow \eqref{eq:transformer_PDE}.
We start with the simple linear self-attention in Section \ref{sec:lin-att} and then consider 
the softmax self-attention in Section \ref{sec:soft-m}. 
\section{Accelerated Transformers with Linear Attention}
\label{sec:lin-att}
In this section, for the linear self-attention, we show that the associated transformer PDE is the Stein variational gradient flow of a quadratic potential energy. We prove that this flow preserves elliptically contoured distributions and provide a spatial discretization that yields a second-order-in-time inertial interacting particle system. 
In this section, we assume that $A$ is symmetric and positive definite.

\subsection{Accelerated Flow} 
For the linear attention, the transformer PDE \eqref{eq:transformer_PDE} reads as
\begin{equation} \label{eq:transformer_PDE_lin}
    \partial_t \rho_t
    = - \nabla \cdot \Big( \rho_t(x) \int y^\T A x \, Vy \rho_t (y) \d y \Big).
\end{equation}
Assume that $V$ is symmetric.
For the potential energy $\F^{\lin} \colon \P(\R^d) \to \R \cup \{ \pm \infty \}$  given by
\begin{equation*}
    \F^{\lin}(\rho)
    \coloneqq - \frac{1}{2} \int_{} y^{\T} \, V y \, \rho(y) \d y,
\end{equation*}
we have that 
$$
\Big[ \frac{\delta \F^{\lin}(\rho)}{\delta \rho}\Big](y) = - \frac12 y^\T V y, \qquad \nabla_y \Big[ \frac{\delta \F^{\lin}(\rho)}{\delta \rho}\Big] = - Vy.
$$
Thus, defining the inverse metric tensor by
$$
(G^{\lin}_{\rho})^{-1}[\Phi] 
   \coloneqq - \nabla \cdot \left(\rho(x) \, \int_{} y^{\T} A x\,  \nabla \Phi(y) \, \rho(y) \d y\right),
$$
we conclude that
\begin{equation*}
    \partial_t \rho_t = - (G_{\rho_t}^{\lin})^{-1}\left[ \frac{\delta \F^{\lin}(\rho_t)}{\delta \rho_t} \right].
\end{equation*}
This transformer PDE is exactly the Stein variational gradient flow \eqref{steingf}
with respect to the energy functional $\F^{\lin}$ with kernel $k(x, y) \coloneqq y^{\T} A x$.



For the linear self-attention, the accelerated gradient flow in \eqref{sys_1} can be specified as in 
the following proposition. The proof is given in  Appendix \ref{sec:proofs}.

\begin{proposition}[Accelerated linear attention PDE] \label{lemma:acc_linear}
    The accelerated linear self-attention  transformer flow in \eqref{sys_1} satisfies
    \begin{equation} \label{eq:acc_lin_transformer_PDE}
        \begin{cases}
            \partial_t \rho_t(x)
            + \nabla \cdot \left( \rho_t(x) \int_{} y^{\T} A x \nabla \Phi_t(y) \rho_t(y) \d{y} \right) = 0, & \rho_t|_{t = 0} = \rho_0 \\[1ex]
            \partial_t \Phi_t(x)
            + \alpha_t \Phi_t(x)
            + \int_{} \langle \nabla \Phi_t(x), \nabla \Phi_t(y) \rangle y^{\T} A x \rho_t(y) \d{y}
            - \frac{1}{2} x^{\T} V x=0, & \Phi_0 = 0.
        \end{cases}
    \end{equation}
    Here, $(\alpha_t)_{t > 0}$ are non-negative damping parameters.
\end{proposition}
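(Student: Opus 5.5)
We substitute the metric $G^{\lin}_\rho$ and the energy $\F^{\lin}$ into the general accelerated system \eqref{sys_1} and compute each term. The first equation is immediate: by the definition of $(G^{\lin}_{\rho})^{-1}$,
\[
\partial_t\rho_t - (G^{\lin}_{\rho_t})^{-1}[\Phi_t] = \partial_t\rho_t + \nabla\cdot\Big(\rho_t(x)\int y^\T A x\,\nabla\Phi_t(y)\rho_t(y)\d y\Big),
\]
which is the first line of \eqref{eq:acc_lin_transformer_PDE}. For the second equation we already know from the discussion above that $\frac{\delta \F^{\lin}}{\delta\rho}(x) = -\frac12 x^\T V x$, which produces the term $-\frac12 x^\T V x$. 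It therefore remains to compute the first variation of the kinetic energy $\frac12\int \Phi\, (G^{\lin}_\rho)^{-1}[\Phi]\d x$ with respect to $\rho$, with $\Phi$ held fixed.

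We integrate by parts once, assuming enough decay of $\rho$ so that the boundary terms at infinity vanish. This treatment is formal on $\Omega=\R^d$, as the paper remarks. We obtain the symmetric bilinear representation
\[
\frac12\int \Phi\,(G^{\lin}_\rho)^{-1}[\Phi]\d x = \frac12\iint \langle\nabla\Phi(x),\nabla\Phi(y)\rangle\, y^\T A x\,\rho(x)\rho(y)\d x\d y .
\]
The integrand kernel $K(x,y)\coloneqq\langle\nabla\Phi(x),\nabla\Phi(y)\rangle\,y^\T A x$ is symmetric in $(x,y)$, because $A$ is symmetric and so $y^\T A x = x^\T A y$. The functional is thus a quadratic form $\frac12\iint K\,\rho\otimes\rho$. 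Perturbing $\rho\mapsto\rho+\varepsilon\varphi$ and differentiating at $\varepsilon=0$ gives two cross terms, which coincide by the symmetry of $K$. This yields $\int K(x,y)\rho(y)\d y$, the stated interaction term $\int\langle\nabla\Phi_t(x),\nabla\Phi_t(y)\rangle y^\T A x\,\rho_t(y)\d y$. The factor $\frac12$ cancels exactly against the factor $2$ from the quadratic dependence.

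The main point requiring care is the symmetry argument in this last step; it is where the assumption that $A$ is symmetric enters. A secondary point is that the first variation is defined only up to an additive constant, since $T^*_\rho\P\cong C^\infty/\R$. This constant does not affect the equation, because $\Phi_t$ itself is defined modulo constants and enters the first equation only through $\nabla\Phi_t$. Collecting the three terms with $\partial_t\Phi_t+\alpha_t\Phi_t$ and the initial conditions $\rho_t|_{t=0}=\rho_0$, $\Phi_0=0$ from \eqref{sys_1} gives \eqref{eq:acc_lin_transformer_PDE}. This is consistent with the general Stein form in Section~\ref{sec:Nesterov} when $k(x,y)=y^\T Ax$.
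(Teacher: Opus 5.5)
Your proposal is correct and follows the paper's proof. Both reduce the statement to the first variation of $\frac12\int\Phi\,(G^{\lin}_\rho)^{-1}[\Phi]\d x$, integrate by parts to get the interaction energy $\frac12\iint\langle\nabla\Phi(x),\nabla\Phi(y)\rangle\, y^{\T}Ax\,\rho(x)\rho(y)\d x\d y$, and differentiate it; your use of the symmetry of $A$ to merge the two cross terms just spells out the step the paper calls \enquote{known}.
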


\subsection{Preservation of Elliptically Contoured Distributions}

We show that if the initial density $\rho_0$ is elliptically contoured, e.g., a Gaussian distribution, 
then the solution $\rho_t$ of \eqref{eq:acc_lin_transformer_PDE} will remain so for all times.

A probability density $\rho$ on $\R^d$ is called \emph{elliptically contoured} \cite[Sec.~5]{S2002} \cite[Sec.~3]{FJS2003}. We write $\rho = E(m, \Sigma, g)$, if $\rho$ can be expressed as
\begin{equation*}
    \det(\Sigma)^{-\frac{1}{2}} g\left( (\cdot - m)^{\T} \Sigma^{-1} (\cdot - m)\right),
\end{equation*}
for some    
$m \in \R^d$, $\Sigma \in \Sym_+(\R; d)$ and some smooth, integrable function $g \colon [0, \infty) \to (0, \infty)$ fulfilling
\begin{itemize}
    \item[i)] normalization:
    $\int_{0}^{\infty} r^{\frac{d}{2} - 1} g(r) \d{r} = \frac{\Gamma\left(\frac{d}{2}\right)}{\pi^{\frac{d}{2}}} $;
    \item[ii)] finite expectation:
    $\int_{0}^{\infty} r^{\frac{d - 1}{2}} g(r) \d{r} < \infty$;
    \item [iii)] finite variance:
    $\int_{0}^{\infty} r^{\frac{d}{2}} g(r) \d{r} < \infty$.
\end{itemize}

We study elliptically contoured distributions because this family is invariant under the pushforward mapping function by affine linear functions:
given the affine linear map $f(x) \coloneqq A x + b$ for a matrix $A \in \R^{d \times d}$ and a vector $b \in \R^d$, we have that $\rho = E(m, \Sigma, g)$ implies that the pushforward density of $\rho$ by $f$ is elliptically contoured, namely $E(A m + b, A \Sigma A^{\T}, g)$,
see \cite[p.~279]{FJS2003}.

\begin{example}[Elliptically contoured distributions]
    For $g(x) = (2 \pi)^{-\frac{d}{2}}e^{-\frac{1}{2} x}$, we obtain \textit{Gaussian distributions}, and for $g(x) = \tfrac{\Gamma\left(\frac{v + d}{2}\right)}{\Gamma\left(\frac{v}{2}\right)} (v \pi)^{-\frac{d}{2}} \left(1 + \frac{1}{v} x\right)^{-\frac{v + d}{2}}$, we get the multivariate \textit{Student-t distributions} with $v > 2$ degrees of freedom.
    There are also other less-known examples, such as logistic distributions, even power-exponential distributions, and generalized hyperbolic distributions, see \cite[Sec.~6]{S2002}.
\end{example}
\vspace{0.5cm}

The expectation value and the variance of an elliptic distribution are, respectively
$$
\E_{X \sim E(m, \Sigma, g)}[X] = m, \qquad  \V_{X \sim E(m, \Sigma, g)}[X] = \kappa_g \Sigma,\quad \text{where } \kappa_g \coloneqq \frac{1}{d} \int_{} \| y \|^2 g(\| y \|^2) \d{y}.
$$
For Gaussian distributions, we have $\kappa_g = 1$ and for the Student-$t$ distribution with $v$ degrees of freedom $\kappa_g = \frac{v}{v - 2}$.

 By the following proposition, whose proof is given in Appendix~\ref{sec:proofs}, the accelerated linear transformer PDE \eqref{eq:acc_lin_transformer_PDE} preserves elliptically contoured distributions.

\begin{proposition}[Preservation of elliptically contoured distributions] \label{lemma:preservation_elliptic}
    Let 
    $(\rho_t, \Phi_t)_{t \ge 0}$ satisfy \eqref{eq:acc_lin_transformer_PDE} and $\rho_0 = E(m, \Sigma, g)$.
    Then it holds
    \begin{equation*}
        \rho_t = E(m_t, \Sigma_t, g)
        \qquad \text{and} \qquad
        \Phi_t(x) = \frac{1}{2} x^{\T} P_t x,
    \end{equation*}
   with $m_t$ and $\Sigma_t, P_t \in \Sym(\R; d)$ determined by 
    \begin{equation}\label{determ}
       \begin{cases}\dot{m}_t
            = C_t m_t, & m_t|_{t = 0} = m_0, \\
            \dot{\Sigma}_t
            = \Sigma_t C_t^{\T} + C_t \Sigma_t, & \Sigma_t|_{t = 0} = \Sigma_0,
        \end{cases}
    \end{equation}
    where 
    $C_t \coloneqq P_t (\kappa_g \Sigma_t + m_t m_t^{\T}) A$ and
    \begin{equation} \label{eq:C_t_and_P_t}
        \begin{cases} \dot{P}_t
        = - \alpha_t P_t - (C_t^{\T} P_t + P_t C_t)
        + V, & t > 0, \\
        P_0 = 0.
        \end{cases}
    \end{equation}
\end{proposition}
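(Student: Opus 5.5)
We argue by an ansatz-and-uniqueness approach. Let $(m_t,\Sigma_t,P_t)$ be the solution of the finite-dimensional ODE system \eqref{determ}--\eqref{eq:C_t_and_P_t}, which exists at least locally by Picard--Lindelöf since the right-hand sides are polynomial. Define $\Phi_t(x) \coloneqq \frac12 x^\T P_t x$ and $\rho_t \coloneqq E(m_t,\Sigma_t,g)$. We verify that this pair solves \eqref{eq:acc_lin_transformer_PDE}. Under uniqueness of solutions of \eqref{eq:acc_lin_transformer_PDE}, assumed formally as in the rest of the paper, this yields the claim.

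The first step is to compute the nonlocal velocity field. With $\nabla\Phi_t(y)=P_t y$ we get
\begin{equation*}
\int y^\T A x\, P_t y\, \rho_t(y)\d y = P_t \Big(\int y y^\T \rho_t(y)\d y\Big) A x = P_t(\kappa_g\Sigma_t + m_t m_t^\T) A x = C_t x,
\end{equation*}
using the stated mean and covariance of elliptic distributions, so that $\E[yy^\T]=\kappa_g\Sigma_t+m_tm_t^\T$. The continuity equation then reads $\partial_t\rho_t + \nabla\cdot(\rho_t C_t x)=0$, whose solution is the pushforward of $\rho_0$ under the linear flow map $M_t$ solving $\dot M_t = C_t M_t$, $M_0=I$. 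By the affine invariance of elliptically contoured families, $(M_t)_\#\rho_0 = E(M_t m_0, M_t\Sigma_0 M_t^\T, g)$. Differentiating, $m_t=M_tm_0$ and $\Sigma_t = M_t\Sigma_0M_t^\T$ satisfy exactly \eqref{determ}. Here $C_t$ is built from the ansatz quantities, so the argument is consistent: the ODE for $(m,\Sigma)$ coincides with the one we started from. Hence the first equation holds.

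The second step is the Hamilton--Jacobi-type equation. The interaction term equals
\begin{equation*}
\int (P_t x)^\T P_t y\, y^\T A x\,\rho_t(y)\d y = x^\T P_t P_t(\kappa_g\Sigma_t+m_tm_t^\T)A x = x^\T P_t C_t x = \tfrac12 x^\T (P_tC_t + C_t^\T P_t)x,
\end{equation*}
where we symmetrize the quadratic form. Then every term in the second equation is a quadratic form in $x$. Matching coefficients of $x^\T(\cdot)x$ gives
\begin{equation*}
\tfrac12\dot P_t + \tfrac12\alpha_tP_t + \tfrac12(P_tC_t+C_t^\T P_t) - \tfrac12 V = 0,
\end{equation*}
which is \eqref{eq:C_t_and_P_t}. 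Symmetry of $P_t$ is preserved because $V$ is symmetric and the right-hand side is symmetric. The initial condition $P_0=0$ matches $\Phi_0=0$. Note that no constant term arises, so the potential is determined exactly and not only up to constants.

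We expect the main obstacle to be rigor rather than algebra. First, we need global existence of the ODE solution; on finite time intervals it is local, and we would state the result on the existence interval. Second, we need uniqueness for the nonlinear PDE system. We would handle the second point by arguing formally, consistent with the paper's remark that the $\R^d$ setting is treated only formally. Alternatively, one can present the result as ``the ansatz yields a solution'' and note that the continuity equation with the smooth linear field $C_tx$ has the unique pushforward solution.
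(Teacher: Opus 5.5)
Your computations are right and coincide with the paper's: the velocity field is $C_tx$, the interaction term is $x^\T P_tC_tx=\frac12x^\T(P_tC_t+C_t^\T P_t)x$, and coefficient matching gives \eqref{eq:C_t_and_P_t}. The gap is that the logical direction is reversed. The proposition asserts that \emph{every} solution $(\rho_t,\Phi_t)$ of \eqref{eq:acc_lin_transformer_PDE} with elliptic initial datum has the stated form. You only show that the ansatz \emph{produces} one solution, and you bridge the difference by postulating uniqueness for the coupled nonlinear, nonlocal system (continuity equation plus Hamilton--Jacobi equation). That uniqueness is not established anywhere in the paper and is not a standard fact, so as written you prove an existence statement. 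Your fallback of presenting the result as \enquote{the ansatz yields a solution} changes the statement.

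The missing idea is that no ansatz is needed to linearize. Because the kernel $y^\T Ax$ is bilinear, for an \emph{arbitrary} solution one has $\int y^\T Ax\,\nabla\Phi_t(y)\rho_t(y)\d y=C_tx$ with $C_t\coloneqq\int\nabla\Phi_t(y)\,y^\T\rho_t(y)\d y\,A$. Likewise, the interaction term in the second equation equals $\langle\nabla\Phi_t(x),C_tx\rangle$. If you freeze $C_t$ as a given continuous matrix function, both equations become linear:
\begin{itemize}
\item The continuity equation with the globally Lipschitz field $C_tx$ has the unique solution $(G_t)_\#\rho_0$, where $\dot G_t=C_tG_t$, $G_0=I_d$ \cite[Lemma~8.1.6]{AGS2008}. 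Hence $\rho_t=E(G_tm_0,G_t\Sigma_0G_t^\T,g)$.
\item The equation $\partial_t\Phi_t+(C_tx)\cdot\nabla\Phi_t+\alpha_t\Phi_t=\frac12x^\T Vx$ with $\Phi_0=0$ is a linear transport equation. By characteristics, its unique solution is $\frac12x^\T P_tx$ with $\dot P_t=-\alpha_tP_t-(C_t^\T P_t+P_tC_t)+V$, $P_0=0$. The linear and constant parts of a general quadratic ansatz satisfy homogeneous linear ODEs with zero data, so they vanish.
\end{itemize}
Only after this do you substitute back to get $C_t=P_t(\kappa_g\Sigma_t+m_tm_t^\T)A$, so that $(m_t,\Sigma_t,P_t)$ solves the closed ODE system. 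This is the paper's route: it needs uniqueness only for linear equations and covers every solution. Your closing remark about the continuity equation with field $C_tx$ points this way, but you apply it only with $C_t$ built from the ansatz quantities, so it does not reach an arbitrary solution.
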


If the initial density $\rho_0$ has zero mean, then we obtain the following more simple evolution of $\rho_t$.
\begin{cor}[Preservation of centered elliptically contoured distributions]
    Let $(\rho_t, \Phi_t)_{t \ge 0}$ satisfy \eqref{eq:acc_lin_transformer_PDE} and $\rho_0 = E(0, \Sigma, g)$. 
    Then,
    \begin{equation*}
        \rho_t = E(0, \Sigma_t, g)
        \qquad \text{and} \qquad
        \Phi_t(x) = \frac{1}{2} x^{\T} P_t x,
    \end{equation*}
    with $\Sigma_t, P_t \in \Sym(\R; d)$ determined by 
    \begin{equation*}
        \dot{\Sigma}_t
        = \kappa_g \left( \Sigma_t A \Sigma_t P_t + P_t \Sigma_t A \Sigma_t\right),
    \end{equation*}
    and
    \begin{equation} \label{eq:C_t_and_P_t_zero_mean}
        \begin{cases}
            \dot{P}_t
            = - \alpha_t P_t - \kappa_g( A \Sigma_t P_t^2 + P_t^2 \Sigma_t A)
            + V, & t > 0, \\
            P_0 = 0.        \end{cases}
    \end{equation}
\end{cor}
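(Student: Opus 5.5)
The plan is to obtain the corollary by specializing Proposition~\ref{lemma:preservation_elliptic} to $m_0 = 0$. The only real work is to show that the mean stays zero for all times, and then to simplify $C_t$.

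First, apply Proposition~\ref{lemma:preservation_elliptic} with $\rho_0 = E(0,\Sigma,g)$. This gives $\rho_t = E(m_t,\Sigma_t,g)$ and $\Phi_t(x) = \frac12 x^\T P_t x$, where $(m_t,\Sigma_t,P_t)$ solves the coupled system \eqref{determ}--\eqref{eq:C_t_and_P_t} with $m_0 = 0$.

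The main obstacle is to justify $m_t \equiv 0$. The equation $\dot m_t = C_t m_t$ looks linear, but $C_t$ itself depends on $m_t$ through $m_t m_t^\T$, so the argument should be made at the level of the full system. I would proceed as follows.
\begin{enumerate}
\item Let $(\tilde\Sigma_t,\tilde P_t)$ solve the reduced system stated in the corollary, i.e.\ the system \eqref{determ}--\eqref{eq:C_t_and_P_t} with $m$ replaced by $0$.
\item Check that the triple $(0,\tilde\Sigma_t,\tilde P_t)$ solves the full system. This is immediate, since $\dot m = C m$ is satisfied trivially by $m \equiv 0$.
\item The right-hand side of the full system is polynomial in $(m,\Sigma,P)$, with $\alpha_t$ continuous for $t>0$. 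Hence it is locally Lipschitz, and Picard--Lindelöf uniqueness gives $(m_t,\Sigma_t,P_t) = (0,\tilde\Sigma_t,\tilde P_t)$ on the common interval of existence.
\end{enumerate}
If $\alpha_t = 3/t$ is singular at $0$, I would start the uniqueness argument at some $t_0>0$, or treat it as in the proposition.

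With $m_t = 0$ we get $C_t = \kappa_g P_t \Sigma_t A$. Since $A$, $\Sigma_t$ and $P_t$ are all symmetric, $C_t^\T = \kappa_g A \Sigma_t P_t$. Substituting into \eqref{determ} gives
\[
\dot\Sigma_t = \Sigma_t C_t^\T + C_t\Sigma_t = \kappa_g\left(\Sigma_t A \Sigma_t P_t + P_t\Sigma_t A\Sigma_t\right).
\]
Substituting into \eqref{eq:C_t_and_P_t}, the two terms become
\[
C_t^\T P_t = \kappa_g A\Sigma_t P_t^2, \qquad P_t C_t = \kappa_g P_t^2 \Sigma_t A,
\]
which yields exactly \eqref{eq:C_t_and_P_t_zero_mean}. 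This completes the proof.
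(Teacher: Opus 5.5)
Your proposal is correct and follows the paper's route: set $m_0=0$ in Proposition~\ref{lemma:preservation_elliptic}, then simplify $C_t=\kappa_g P_t\Sigma_t A$ and $C_t^{\T}=\kappa_g A\Sigma_t P_t$ using symmetry. The one difference is your Picard--Lindel\"of argument for $m_t\equiv 0$, which is more than needed: the proposition's proof already gives $m_t=G_t m_0$, and along the fixed solution $\dot m_t=C_t m_t$ is a linear ODE with a given time-dependent coefficient, so $m_0=0$ forces $m_t\equiv 0$ at once.
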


\subsection{Accelerated Linear Attention Dynamics}

We now discretize \eqref{eq:acc_lin_transformer_PDE} in space. For $t > 0$, we approximate the density $\rho_t$ by the empirical measure 
$\frac{1}{N} \sum_{j = 1}^{N} \delta_{X_i(t)}$.
In Appendix \ref{sec:proofs}, we provide the following discretization result.

\begin{proposition}\label{prop:add}
    The deterministic interacting particle system associated to \eqref{eq:acc_lin_transformer_PDE} is
    \begin{equation*} \label{eq:acc_linear_particles}
        \begin{cases}
            \dot{X}_i(t)
            = \frac{1}{N} \sum_{j = 1}^{N} Y_j(t) X_j(t)^{\T} A X_i(t), & X_i(0) = X_{i, 0}, \\ 
            \dot{Y}_i(t)
            = - \alpha_t Y_i(t) - \frac{1}{N} \sum_{j = 1}^{N} \langle Y_i(t), Y_j(t) \rangle A X_j(t) + V X_i(t), & Y_i(0) = 0,
        \end{cases}
    \end{equation*}
    where $Y_i(t) = \nabla \Phi_t(X_i(t))$.
    In the matrix form, the above system satisfies
    \begin{equation} \label{eq:acc_linear_matrix}
        \begin{cases}
            \dot{\mathbf X}(t)
            = \frac{1}{N} \mathbf X(t) A \mathbf X(t)^{\T} \mathbf Y(t), & \mathbf{X}(0) = X_0, \\
            \dot{\mathbf Y}(t)
            = - \alpha(t) \mathbf Y(t)
            - \frac{1}{N} \mathbf Y(t) \mathbf Y(t)^{\T} \mathbf X(t) A
            + \mathbf X(t) V, & \mathbf{Y}(0) = \mathbf{0}.
        \end{cases}
    \end{equation}
\end{proposition}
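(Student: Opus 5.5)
The plan is to follow the Lagrangian characteristics of the accelerated PDE \eqref{eq:acc_lin_transformer_PDE} and then substitute the empirical measure. Write $v_t(x) \coloneqq \int y^\T A x \, \nabla \Phi_t(y) \rho_t(y) \d y$ for the velocity field of the continuity equation. Particles transported by this equation satisfy $\dot X_i = v_t(X_i)$. Inserting $\rho_t \approx \frac1N\sum_j \delta_{X_j(t)}$ and the definition $Y_j \coloneqq \nabla\Phi_t(X_j)$ gives
\[
\dot X_i = \frac1N \sum_j Y_j \, X_j^\T A X_i,
\]
which is the first equation.

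For the momentum, I would differentiate $Y_i(t) = \nabla\Phi_t(X_i(t))$ along the trajectory using the chain rule:
\[
\dot Y_i = \nabla\partial_t\Phi_t(X_i) + \nabla^2\Phi_t(X_i)\,\dot X_i .
\]
Then I take the spatial gradient of the second equation in \eqref{eq:acc_lin_transformer_PDE}. By the product rule, the gradient of $\int \langle\nabla\Phi_t(x),\nabla\Phi_t(y)\rangle y^\T A x \rho_t(y)\d y$ splits into two terms.
\begin{itemize}
\item The first is $\nabla^2\Phi_t(x)\int \nabla\Phi_t(y)\, y^\T A x\, \rho_t(y)\d y = \nabla^2\Phi_t(x)\, v_t(x)$.
\item The second is $\int \langle \nabla\Phi_t(x),\nabla\Phi_t(y)\rangle A y \,\rho_t(y)\d y$, using $\nabla_x (y^\T A x) = A^\T y = Ay$ since $A$ is symmetric.
\end{itemize}
The potential term contributes $\nabla(\frac12 x^\T V x) = Vx$, because $V$ is symmetric.

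Evaluating at $x = X_i$ and substituting, the Hessian term $\nabla^2\Phi_t(X_i)v_t(X_i)$ cancels exactly against the convective term $\nabla^2\Phi_t(X_i)\dot X_i$ from the chain rule. This is the key structural step. What remains is
\[
\dot Y_i = -\alpha_t Y_i - \int\langle Y_i,\nabla\Phi_t(y)\rangle A y\,\rho_t(y)\d y + V X_i .
\]
Discretizing this integral with the empirical measure yields $-\frac1N\sum_j\langle Y_i,Y_j\rangle A X_j$. The initial conditions follow directly: $X_i(0)=X_{i,0}$, and $\Phi_0=0$ gives $Y_i(0)=0$. The delicate points are mostly bookkeeping: keeping track of which argument carries the gradient in the interaction term, and noting that the Hessian cancellation holds exactly for the continuous flow. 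The closure is then the standard formal assumption that $\nabla\Phi_t$ need only be evaluated at particle locations.

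For the matrix form, I stack the particles as rows, $\mathbf X = (X_1,\dots,X_N)^\T \in \R^{N\times d}$, and likewise for $\mathbf Y$.
\begin{itemize}
\item The $i$-th row of $\frac1N \mathbf X A \mathbf X^\T \mathbf Y$ is $\frac1N\sum_j (X_i^\T A X_j) Y_j^\T$. This matches the first equation because $X_i^\T A X_j = X_j^\T A X_i$ by symmetry of $A$.
\item The $i$-th row of $\frac1N \mathbf Y\mathbf Y^\T \mathbf X A$ is $\frac1N\sum_j \langle Y_i,Y_j\rangle X_j^\T A$, which is the transpose of $\frac1N \sum_j \langle Y_i,Y_j\rangle AX_j$.
\item The $i$-th row of $\mathbf X V$ is $(VX_i)^\T$, again using symmetry of $V$.
\end{itemize}
Together these give \eqref{eq:acc_linear_matrix}.
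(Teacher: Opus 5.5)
Your proposal is correct and follows the paper's proof essentially step for step. Both arguments apply the chain rule to $Y_i = \nabla\Phi_t(X_i)$, take the spatial gradient of the $\Phi$-equation, cancel the Hessian term against $\nabla^2\Phi_t(X_i)\dot X_i$, and substitute the empirical measure. The differences are cosmetic: you cancel the Hessian term at the continuum level via $v_t(X_i)=\dot X_i$, whereas the paper cancels it after discretizing using $X_i^\T A X_j = X_j^\T A X_i$. You also spell out the row-by-row check of the matrix form, which the paper leaves implicit.
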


The explicit Euler discretization \eqref{eq:plain_explicit_Euler} with step sizes $(h_k)_{k \in \N}$ of \eqref{eq:acc_linear_matrix} is
\begin{equation*}
    \begin{cases}
    \mathbf{X}^{(k + 1)}
    = \mathbf{X}^{(k)} + \frac{h_k}{N} \mathbf{X}^{(k)} A (\mathbf{X}^{(k)})^{\tT} \mathbf{Y}^{(k)}, & \mathbf{X}^{(0)} = \mathbf{X}_0, \\
    \mathbf{Y}^{(k + 1)}
    = \alpha_k \mathbf{Y}^{(k)} - \frac{h_k}{N} \mathbf{Y}^{(k)} \left( \mathbf{Y}^{(k)}\right)^{\tT} \mathbf{X}^{(k)} A + \mathbf{X}^{(k)} V, & \mathbf{Y}^{(0)} = \mathbf{0}.
    \end{cases}
\end{equation*}

\section{Accelerated Transformers with Softmax attention} \label{sec:soft-m}
In this section, we present the main result of this paper.
For the softmax self-attention, we briefly review that the transformer PDE is a gradient flow in a {\em softmax self-attention mobility Wasserstein space} $(\P, G_\rho^{\mathrm{SM}})$ \cite{Geshkovski2025Transformers}.
We then derive the
accelerated gradient flow in $(\P, G_\rho^{\mathrm{SM}})$, for which we provide first- and second-order-in-time inertial interacting particle systems for the softmax self-attention layer. 
\subsection{Accelerated Flow} 
For the softmax self-attention, the transformer PDE \eqref{eq:transformer_PDE} reads as
$$
\partial_t \rho_t = - \nabla \cdot \Big( \rho_t(x) \int Vy \frac{\exp(y^\T Ax)}{\int \exp(z^\T Ax) \rho_t(z) \d z} \rho_t(y) \d y
\Big).
$$
For the energy $\F^{\SM} \colon \P(\R^d) \to \R \cup \{ \pm \infty \}$  given by
\begin{equation} \label{eq:softmaX_interaction_energy}
        \F^{\text{SM}}(\rho)
        \coloneqq 
        - \frac{1}{2} \int_{} \int_{} \exp( y^\T A x) \rho(x) \rho(y) \d x \d y,
    \end{equation}
we have that 
\begin{align}
\Big[ \frac{\delta \F^{\SM}(\rho)}{\delta \rho}\Big] 
&= 
- \int_{} \exp\left( y^{\T} A x\right) \rho(y) \d y, \label{helper_a}
\\
\nabla_x \Big[ \frac{\delta \F^{\SM}(\rho)}{\delta \rho}\Big] &= - \int \exp\left( y^{\T} A x\right) \, A y \,\rho(y) \d y.
\end{align}
Thus, defining the inverse metric tensor by
\begin{equation} \label{eq}
        (G_{\rho}^{\SM})^{-1}[\Phi]
        \coloneqq   
        -  \nabla \cdot \Big(\rho \, \frac{V A^{-1} \nabla_x \Phi}{\int \exp \left( z^\T A x\right) \rho(z) \d z} \Big).
    \end{equation}
We observe that the transformer PDE satisfies
\begin{equation*}
    \partial_t \rho_t = - (G_{\rho_t}^{\SM})^{-1}\left[ \frac{\delta \F^{\SM}(\rho_t)}{\delta \rho_t} \right].
\end{equation*}
Assume that $$B \coloneqq VA^{-1},$$ is symmetric and positive definite. Then the transformer PDE resembles the Wasserstein-2 type gradient flow with a nonlinear mobility matrix function
    $\chi: \mathcal P(\Omega) \to \mathcal \C^{\infty}(\Omega)^{d \times d}$:
    $$\partial_t \rho_t =  \nabla \cdot \Big(\chi(\rho_t) \nabla \, \Big[\frac{\delta \F^{\text{SM}}(\rho_t)}{\delta \rho_t} \Big]\Big), \qquad 
        \chi(\rho)(x)
        \coloneqq  \frac{\rho(x)}{\int_{} \exp\left(z^\T A x\right) \rho(z) \d z} V A^{-1}.
    $$
Furthermore, we introduce the notation $\| v \|_B^2 \coloneqq \langle v, B v \rangle$, $v \in \R^d$.
The following proposition specifies the accelerated gradient flow in \eqref{sys_1}
for the softmax self-attention layer. The proof is provided in \ref{sec:proofs}.

\begin{proposition}[Accelerated softmax attention PDE] \label{lemma:acc_softmax}
      The accelerated softmax attention flow
    in \eqref{sys_1} satisfies
    \begin{equation} \label{eq:acc_softmax}
        \begin{cases}
            \partial_t \rho_t(x)
            + \nabla \cdot \left(\rho_t(x) \displaystyle\frac{V A^{-1} \nabla \Phi_t(x)}{\int_{} \exp\left(z^\T A x\right) \rho_t(z) \d z}\right)=0, & \rho_t|_{t = 0} = \rho_0,
            \\[2em]
            \partial_t \Phi_t(x)
            +\alpha_t \Phi_t(x)
            + \displaystyle\frac{1}{2} \frac{\| \nabla \Phi_t(x) \|_B^2}{\int_{} \exp\left(z^\T A x\right) \rho_t(z) \d z} 
            \\[2ex]
             \quad - \displaystyle\frac{1}{2} \int \exp\left(y^\T A x\right) \left( \frac{\| \nabla \Phi_t(y) \|_B^2}{\left(\int \exp\left(z^\T A y\right) \rho_t(z) \d z \right)^2} + 2\right) \rho_t(y) \d y=0, & \Phi_0 = 0.
        \end{cases}
    \end{equation}
\end{proposition}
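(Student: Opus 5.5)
The plan is to substitute the metric tensor $G^{\SM}_\rho$ and the energy $\F^{\SM}$ directly into the general accelerated system \eqref{sys_1}, and to compute the three terms appearing there one at a time. The first equation of \eqref{sys_1} reads $\partial_t\rho_t = (G^{\SM}_{\rho_t})^{-1}[\Phi_t]$. Inserting the definition \eqref{eq} of $(G^{\SM}_\rho)^{-1}$ gives exactly the continuity equation in \eqref{eq:acc_softmax}, with velocity field $B\nabla\Phi_t(x)/Z_{\rho_t}(x)$, where we write $Z_\rho(x)\coloneqq\int \exp(z^\T A x)\rho(z)\d z$. The first variation $\frac{\delta\F^{\SM}}{\delta\rho}$ is already available from \eqref{helper_a}. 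It contributes the term $-\int \exp(y^\T Ax)\rho_t(y)\d y$, and this accounts for the ``$+2$'' inside the last integral once it is multiplied by $-\tfrac12$.

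The main work is the term $\frac12\frac{\delta}{\delta\rho}\int\Phi\,(G^{\SM}_\rho)^{-1}[\Phi]\d x$ with $\Phi$ held fixed. First integrate by parts, formally on $\R^d$ under decay assumptions, to rewrite the kinetic energy as
\begin{equation*}
\mathcal K(\rho)\coloneqq\frac12\int \frac{\|\nabla\Phi(x)\|_B^2}{Z_\rho(x)}\,\rho(x)\d x .
\end{equation*}
This step uses the symmetry of $B$, so that $\langle\nabla\Phi,B\nabla\Phi\rangle=\|\nabla\Phi\|_B^2$. Next, perturb $\rho\mapsto\rho+\varepsilon\varphi$ and differentiate at $\varepsilon=0$. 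There are two contributions. The explicit factor $\rho$ gives $\frac12\|\nabla\Phi(x)\|_B^2/Z_\rho(x)$. The denominator gives $-\frac12\int \frac{\|\nabla\Phi(x)\|_B^2}{Z_\rho(x)^2}\rho(x)\big(\int\exp(z^\T Ax)\varphi(z)\d z\big)\d x$.

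Swapping the order of integration and reading off the coefficient of $\varphi$, evaluated at the point $x$, yields
\begin{equation*}
\frac{\delta \mathcal K}{\delta\rho}(x)=\frac12\frac{\|\nabla\Phi(x)\|_B^2}{Z_\rho(x)}-\frac12\int \exp(x^\T Ay)\frac{\|\nabla\Phi(y)\|_B^2}{Z_\rho(y)^2}\rho(y)\d y .
\end{equation*}
Since $A$ is symmetric, $\exp(x^\T Ay)=\exp(y^\T Ax)$. I expect this relabelling of variables in the denominator derivative to be the only delicate point, because one must match which variable sits inside $Z$.

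Finally, add the three pieces together with $\partial_t\Phi_t+\alpha_t\Phi_t$. Combining the two integrals over $y$ into one produces the integrand $\exp(y^\T Ax)\big(\|\nabla\Phi_t(y)\|_B^2/Z_{\rho_t}(y)^2+2\big)\rho_t(y)$ with prefactor $-\tfrac12$, which is the second line of \eqref{eq:acc_softmax}. The initial conditions carry over unchanged from \eqref{sys_1}. All steps are formal on $\R^d$, exactly as in the paper's treatment of \eqref{sys_1}, and first variations are understood up to additive constants, i.e.\ as elements of $T^*_\rho\P$.
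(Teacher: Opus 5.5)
Your proposal is correct and follows the paper's proof step by step. You read off the continuity equation from \eqref{eq}, integrate by parts to get $\int \rho\,\|\nabla\Phi\|_B^2/Z_\rho$, differentiate along $\rho+\varepsilon\varphi$ (one term from the explicit $\rho$ and one from $Z_\rho$), and add \eqref{helper_a}. Your denominator $Z_\rho(y)^2$ in the nonlocal term is right and matches the statement; the paper's intermediate display mistakenly writes this denominator in the variable $x$.
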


\subsection{Accelerated Softmax Attention Dynamics}

For $t > 0$, we approximate the density $\rho_t$ by the empirical measure 
$\frac{1}{N} \sum_{j = 1}^{N} \delta_{X_i(t)}$.
In Appendix \ref{sec:proofs}, we prove the following discretization result.

\begin{proposition}\label{lemma:acc_softmax_particles}
    The deterministic interacting particle system associated to \eqref{eq:acc_softmax} is for $i=1,\ldots,N$ given by
    \begin{equation*} \label{eq:acc_softmax_particles}
        \begin{cases}
            \dot{X}_i(t)
            = \displaystyle\frac{N B Y_i(t)}{\sum_{j = 1}^{N} M_{i, j}(t)}, \\ 
            \dot{Y}_i(t)
            = - \alpha_t Y_i(t) + \displaystyle\frac{N}{2} A \sum_{j = 1}^{N} X_j(t) M_{i, j}(t) \left( \displaystyle\frac{\| Y_i(t) \|_B^2  }{\left( \sum_{k = 1}^{N} M_{i, k}(t)\right)^2} + \frac{\| Y_j(t) \|_B^2}{\left(\sum_{k = 1}^{N} M_{j, k}(t) \right)^2} + 2\right),
        \end{cases}
    \end{equation*}
    where $Y_i(t) \coloneqq \nabla \Phi_t(X_i(t))$ and $M_{i, j}(t) \coloneqq \exp\left( X_i(t)^{\T} A X_j(t)\right)$. 
    \end{proposition}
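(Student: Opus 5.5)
The plan is to derive the particle system by following the characteristics of the continuity equation in \eqref{eq:acc_softmax}. Then I obtain the momentum equation by differentiating the Hamilton--Jacobi-type equation for $\Phi_t$ in space and evaluating it along those characteristics. Throughout I write
\begin{equation*}
D_t(x) \coloneqq \int \exp(z^\T A x)\rho_t(z)\d z,
\qquad
\nabla D_t(x) = \int \exp(z^\T A x)\, A z\, \rho_t(z) \d z,
\end{equation*}
where the gradient formula uses the symmetry of $A$.

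\textbf{Step 1 (positions).} The first line of \eqref{eq:acc_softmax} is a continuity equation $\partial_t\rho_t + \nabla\cdot(\rho_t v_t)=0$ with velocity field $v_t(x) = B\nabla\Phi_t(x)/D_t(x)$. Particles in the Lagrangian description therefore move by $\dot X_i = v_t(X_i)$. Inserting the empirical measure $\rho_t = \frac1N\sum_j \delta_{X_j(t)}$ gives $D_t(X_i) = \frac1N\sum_k M_{i,k}$. With $Y_i \coloneqq \nabla\Phi_t(X_i)$ this yields the stated $\dot X_i = N B Y_i/\sum_k M_{i,k}$.

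\textbf{Step 2 (momenta).} By the chain rule,
\begin{equation*}
\dot Y_i = \nabla\partial_t\Phi_t(X_i) + \nabla^2\Phi_t(X_i)\,\dot X_i .
\end{equation*}
I take the gradient of the second line of \eqref{eq:acc_softmax} and compute it term by term.
\begin{itemize}
\item The damping term gives $-\alpha_t\nabla\Phi_t$.
\item The kinetic term, using the symmetry of $B$, gives
\begin{equation*}
\nabla\Big(\tfrac12\|\nabla\Phi_t\|_B^2/D_t\Big) = \nabla^2\Phi_t\, B\nabla\Phi_t/D_t - \tfrac12\|\nabla\Phi_t\|_B^2\,\nabla D_t/D_t^2 .
\end{equation*}
The first summand is exactly $\nabla^2\Phi_t(X_i)\dot X_i$, so it cancels with the Hessian term from the chain rule.
\item The nonlocal term has $x$-gradient
\begin{equation*}
\tfrac12\int \exp(y^\T A x)\, A y\,\Big(\|\nabla\Phi_t(y)\|_B^2/D_t(y)^2 + 2\Big)\rho_t(y)\d y .
\end{equation*}
\end{itemize}
Hence
\begin{equation*}
\dot Y_i = -\alpha_t Y_i + \tfrac12\|Y_i\|_B^2\,\frac{\nabla D_t(X_i)}{D_t(X_i)^2} + \tfrac12\int \exp(y^\T A X_i)\, A y\,\Big(\frac{\|\nabla\Phi_t(y)\|_B^2}{D_t(y)^2}+2\Big)\rho_t(y)\d y .
\end{equation*}
The key feature is that this involves only $\nabla\Phi_t$ at particle locations, not second derivatives, so it closes in $(X_i,Y_i)$.

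\textbf{Step 3 (empirical substitution).} I plug in $\nabla D_t(X_i) = \frac1N\sum_j M_{i,j} A X_j$ and $D_t(X_j) = \frac1N\sum_k M_{j,k}$, replacing integrals against $\rho_t$ by $\frac1N\sum_j$. The $\|Y_i\|_B^2$ and $\|Y_j\|_B^2$ terms each acquire the factor $\frac1N\cdot N^2 = N$ from the squared normalisations in the denominator. Collecting them under the common sum $\frac N2 A\sum_j X_j M_{i,j}(\cdots)$ gives the stated bracket.

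\textbf{Main obstacle.} I expect the main difficulty to be bookkeeping rather than any analytic step. First, the Hessian cancellation in Step 2 must be verified carefully; it relies on $B$ and $A$ being symmetric. Second, the factors of $N$ must be tracked, especially for the constant ``$+2$'' term, which does not pick up the $N^2$ from the denominators. A naive substitution gives $\frac1N\sum_j M_{i,j} A X_j$ for that term, not $N\sum_j M_{i,j}AX_j$. To match the displayed formula I would check which empirical normalisation is intended, namely unnormalised sums $\sum_j\delta_{X_j}$ in the interaction energy or a rescaling of $\F^{\SM}$ by $N^2$. I would then fix that convention before finalising the constant.

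Rigour is only formal here: I assume enough smoothness of $\Phi_t$ for the characteristics to be well-defined, consistent with the paper's formal treatment on $\R^d$.
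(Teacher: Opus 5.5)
Your proposal follows the paper's proof essentially step for step: Lagrangian characteristics for $X_i$, the chain rule for $Y_i=\nabla\Phi_t(X_i)$, the spatial gradient of the Hamilton--Jacobi equation in \eqref{eq:acc_softmax} with the Hessian terms cancelling, and finally substitution of the empirical measure. The normalisation issue you flag is genuine, and the paper does not address it: its proof simply writes the final bracket with $+2$. Substituting $\rho_t\approx\frac1N\sum_j\delta_{X_j(t)}$ consistently gives $\frac1N\sum_j M_{i,j}AX_j$ for that term (i.e.\ $+\frac{2}{N^2}$ inside the bracket), and the displayed formula holds exactly only under your proposed convention of replacing $\F^{\SM}$ by $N^2\F^{\SM}$, which is also what the particle Hamiltonian \eqref{eq:conservative_Hamiltonian_softmax} implicitly does.
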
   
    
    \begin{remark}    Let $\mathbf{X}(t) \in \R^{N \times d}$ have the rows $X_i(t) \in \R^d$, and $\mathbf{Y}(t)$ be defined analogously. 
    Further, set $\mathbf{M}(t) \coloneqq \left(M_{i,j}(t) \right)_{i,j=1}^N$,
    $\mathbf{R} \coloneqq \diag\left( (\mathbf{Y} \odot \mathbf{Y} B^{\T}) \1_d \oslash \big((\mathbf{M} \1) \odot (\mathbf{M} \1)\big)\right)$, and let $\{ \mathbf{R}, \mathbf{M} \} \coloneqq \mathbf{R} \mathbf{M} + \mathbf{M} \mathbf{R}$ denote the anti-commutator. Further, denote elementwise multiplication and division by $\odot$ and $\oslash$, respectively.
    Then, we  can write the above system compactly as follows:
    \begin{equation} \label{eq:acc_softmax_matrices}
        \begin{cases}
            \dot{\mathbf{X}}(t)
            = N \diag( \mathbf{M}(t) \1)^{-1} \mathbf{Y}(t) B^{\T}, \\
            \dot{\mathbf{Y}}(t)
            = - \alpha_t \mathbf{Y}(t)
            + \frac{N}{2} \big( \{\mathbf{R}(t), \mathbf{M}(t) \} + 2 \mathbf{M}(t) \big) \mathbf{X}(t) A.
        \end{cases}
    \end{equation}   
        Consider the conservative part of \eqref{eq:acc_softmax_matrices}, i.e., \eqref{eq:acc_softmax_matrices} without the term $\alpha(t) \mathbf{Y}(t)$.
A Hamiltonian for the system is
\begin{equation} \label{eq:conservative_Hamiltonian_softmax}
  H^{\SM} \colon \R^{N \times d} \times \R^{N \times d} \to \R, 
   \quad (\mathbf{X}, \mathbf{Y})
    \mapsto \frac{N}{2} \tr(\diag(\mathbf{M} \1)^{-1} \mathbf{Y} B^{\T} \mathbf{Y}^{\T}) - \frac{N}{2} \1^{\T} \mathbf{M} \1.
\end{equation}
The second summand of $H^{\SM}$ is a constant multiple of the particle approximation of the potential energy \eqref{eq:softmaX_interaction_energy}.
This Hamiltonian is non-separable: its first summand depends on $\mathbf{Y}$ \textit{and on} $\mathbf{X}$ simultaneously, resulting in a major difference between our work and the scheme from \cite{ZPR2026}.
\end{remark}

We now apply the time discretizations outlined in \Cref{subsec:time_discretizations} to the accelerated transformer with softmax self-attention \eqref{eq:acc_softmax_matrices}.
We view the discretized time update as the accelerated softmax attention layer.

\begin{example}
    The time-dependent Hamiltonian belonging to the conservative softmax-Hamiltonian \eqref{eq:conservative_Hamiltonian_softmax} for \eqref{eq:acc_softmax_matrices} is
    \begin{equation} \label{eq:softmax_Hamiltonian}
    \begin{aligned}
        \tilde{H}^{\SM} \colon \R \times \R^{N \times d} \times \R^{N \times d} & \to \R, \\
        (t, \mathbf{Q}, \mathbf{P}) & \mapsto \frac{N}{2} \left(e^{-\eta(t)} \tr\left( \diag(M(\mathbf{Q}) \1)^{-1} \mathbf{P} B^{\T} \mathbf{P}^{\T}\right) - e^{\eta(t)} \1^{\T} M(\mathbf{Q}) \1 \right).
    \end{aligned}
    \end{equation}
\end{example}

\begin{example}
For softmax self-attention, the plain explicit Euler method \eqref{eq:plain_explicit_Euler} becomes
\begin{equation*} \label{eq:softmax_plain_explicit_Euler}
    \begin{cases}
    \mathbf{M}^{(k)}
    = \exp\left( \mathbf{X}^{(k)} A (\mathbf{X}^{(k)})^{\T}\right), \\
    \mathbf{X}^{(k + 1)}
    = \mathbf{X}^{(k)}
    + \tau N \diag(\mathbf{M}^{(k)} \1)^{-1} \mathbf{Y}^{(k)} B^{\T}, \\
    \mathbf{R}^{(k)}
    = \diag\big(\mathbf{Y}^{(k)} \odot \mathbf{Y}^{(k)} B) \1_d \oslash \big((\mathbf{M}^{(k 
    )} \1) \odot (\mathbf{M}^{(k 
    )} \1)\big)\big), \\
    \mathbf{Y}^{(k + 1)}
    = - \alpha_k \mathbf{Y}^{(k)} + \frac{\tau}{2} N \big( \{ \mathbf{R}^{(k)}, \mathbf{M}^{(k 
    )} \} + 2 \mathbf{M}^{(k 
    )}\big) \mathbf{X}^{(k + 1)} A.
    \end{cases}
\end{equation*}
Further, the \textit{exponential Euler method} \eqref{eq:Presympl_exp_euler} satisfies 
    \begin{equation*}
        \begin{cases}
            \mathbf{M}^{(k)}
            = \exp\left( \mathbf{X}^{(k)} A (\mathbf{X}^{(k)})^{\T}\right), \\
            \mathbf{X}^{(k + 1)}
            = \mathbf{X}^{(k)} + h N \diag(\mathbf{M}^{(k)} \1)^{-1} \mathbf{Y}^{(k)} B^{\tT}, \\
            \mathbf{R}^{(k)}
            = \diag\big(\mathbf{Y}^{(k)} \odot \mathbf{Y}^{(k)} B) \1_d \oslash \big((\mathbf{M}^{(k 
            )} \1) \odot (\mathbf{M}^{(k 
            )} \1)\big)\big), \\
            \mathbf{Y}^{(k + 1)} = \sigma_k \mathbf{Y}^{(k)} + h \int_{0}^{h} \exp\left( - \int_{s}^{h} \alpha(z) \d{z}\right) \d{s} \frac{N}{2} \left( \{ \mathbf{R}^{(k)}, \mathbf{M}^{(k)} \} + 2 \mathbf{M}^{(k)} \right) \mathbf{X}^{(k)} A.
        \end{cases}
    \end{equation*}
\end{example}

\section{Numerical Results} \label{sec:numerics}
Finally, we demonstrate the numerical performance of our method\footnote{The code reproducing the experiments is available on GitHub: \url{https://github.com/ViktorAJStein/SympFormer}.}.
The goal of the decoder-only causal nanoGTP-style transformer is to predict the next token.
All models end with a final LayerNorm and a linear head, and each layer learns the matrices $Q$, $K$, and $V$.
While the theory above models only single-head self-attention, in our implementation of the accelerated softmax-attention, the score matrices are averaged across attention heads before exponentiation.
To evaluate performance, we use a cross-entropy loss.

We observe that the oracle-preserving SympFormer outperforms the baseline and the Yuriiformer architecture, achieving lower validation loss at higher wall-time costs for the same number of iterations.

\subsection{The SympFormer architecture}
In the following, we explain the differences between the mathematical formalism introduced above and the implementation.
The precise implementation details are deferred to \Cref{section:Implementation}.

\paragraph{Simplified time discretization}
Let us denote the right-hand sides of the two update equations in \eqref{eq:acc_softmax_matrices} resp. \eqref{eq:acc_linear_matrix} by $F(\mathbf{X}, \mathbf{Y})$ and $G(\mathbf{X}, \mathbf{Y})$, respectively, ignoring the damping term $\alpha(t) \mathbf{Y}(t)$.
In terms of compute, the bottleneck of the transformer architecture is evaluating the self-attention kernel $\kappa$ from \eqref{eq:self_attn_kernel}.
To limit the number of computations of $\kappa$, the SympFormer uses a \textit{simultaneous} forward Euler update on both $\mathbf{X}$ and $\mathbf{Y}$ instead of the conformally symplectic Euler scheme~\eqref{eq:presymp_Euler}, in which the momentum is updated first, and then the position update uses the \textit{new} momentum. 

To construct a learnable but principled damping schedule, we choose the log-linear damping schedule $\alpha(t) = \frac{r}{t} + m$ \cite{FJV2021} for per-layer learnable parameters $r, m > 0$.
Heuristically, this choice of damping schedule ensures that the damping decreases over time, as in the usual Nesterov scheme, and also sets a \enquote{damping floor} $m$ to prevent the damping from becoming too small at later times.
Intuitively, for large values of $t > 0$, the particles should be in a region where the objectives $\F^{\SM}$ resp. $\F^{\lin}$ is strongly convex, so an approximately constant damping is the correct schedule.

We also learn separate step sizes $h_{k, X}$ and $h_{k, Y}$ for the position and momentum updates, respectively, giving the optimizer independent control over both.


In the implementation, each layer carries its own attention-step parameters and its own damping schedule.
Hence, the MLP substep reuses the attention momentum stream as its look-ahead velocity.

We summarize the architecture in \Cref{algo:SympFormer} and visualize it in \figref{fig:attn-MLP}.

\begin{algorithm}[t]
  \caption{Forward pass through the SympFormer -- an accelerated transformer.}
  \label{algo:SympFormer}
  \DontPrintSemicolon
  \KwIn{Batch size $B \in \N$, token indices $\mathrm{idx} \in \{0,\dots,V{-}1\}^{B \times N}$, initial time $t_0$ \;
        $L$ blocks, each with learned scalar parameters
        $\hX_\ell, \hY_\ell > 0$ (position/momentum step sizes),
        MLP scalars $m_{\MLP,\ell}, \beta_{\MLP,\ell} \in (0,1)$, $\gamma_{\MLP,\ell} > 0$\;
        learned damping coefficients
        $c_{\log}, c_{\mathrm{lin}} > 0$
        with $\eta(t) = c_{\log}\ln t + c_{\mathrm{lin}}\,t$, which are all used to build the coefficients $\zeta_{1, \ell}$, $\zeta_{2, \ell}$, defined in \cref{tab:zeta}, depending on the time discretization.}
  \KwOut{Logits $\in \R^{B \times N \times V}$.}

  $x \gets \mathrm{TokEmb}(\mathrm{idx}) + \mathrm{PosEmb}(\mathrm{pos})$
  \tcp*[r]{token features}
  $y \gets 0$ 
  \tcp*[r]{attention momentum}

  \For{$\ell = 0, 1, \ldots, L-1$}{
    evaluate oracles $F$ and $G$ (right hand side in \eqref{eq:acc_softmax_matrices} or \eqref{eq:acc_linear_matrix}) 
    \tcp*[r]{Accelerated attention step}
    \;
    $\alpha_\ell \gets c_{\log}/t_\ell + c_{\mathrm{lin}}$
    \tcp*[r]{instantaneous damping rate $\dot\eta(t_\ell)$}\;
    $y \;\gets\; \zeta_{1, \ell} y + \hY_\ell \zeta_{2, \ell} G $
    \tcp*[r]{forward step on $y$}\;
    $x_{\ell + \frac{1}{2}} \;\gets\; x + \hX_\ell\, F$
    \tcp*[r]{forward step on $x$}\;
    $y_{\ell + \frac{1}{2}} \;\gets\; \LN_Y(y)$
    \tcp*[r]{momentum LayerNorm}\;
    \tcp*[r]{Accelerated MLP step}
    $\tilde x_\ell \gets x_{\ell+\frac12} + m_{\MLP,\ell}\,y_{\ell+\frac12}$\;
    $d_\ell \gets \MLP_\ell(\LN(\tilde x_\ell))$\;
    $y \gets \LN_V\!\bigl(\beta_{\MLP,\ell}\,y_{\ell+\frac12} + \gamma_{\MLP,\ell}\, d_\ell\bigr)$\;
    $x \gets x_{\ell+\frac12} + y$\;
    $t \gets t + \hX_\ell$\;
  }

  $\mathrm{logits} \gets \LN_f(x)\,W_{\mathrm{lm}}^\top$
  \tcp*[r]{final layer norm and linear head}

  \Return{$\mathrm{logits}$}
\end{algorithm}

\begin{figure}[t]
    \centering
        \resizebox{\textwidth}{!}{%
\begin{tikzpicture}[
  x=1cm,y=1cm,>=stealth,thick,
  every node/.style={font=\small},
  state/.style={draw,circle,minimum size=8mm,inner sep=0pt},
  plus/.style={draw,circle,minimum size=5.2mm,inner sep=0pt},
  block/.style={draw,minimum width=13mm,minimum height=8mm,inner sep=0pt},
  lnv/.style={draw,minimum width=11mm,minimum height=7mm,inner sep=1pt},
  lab/.style={font=\scriptsize,inner sep=1pt},
  look/.style={font=\scriptsize}
]

\node[state] (x0) at (-1,1.1) {$x_\ell$};
\node[state] (y0) at (-1,-.5) {$y_\ell$};
\node[state] (xin1) at (.5,.25) {$\bar x_\ell$};
\draw[->] (xin1) -- (1.5,.25);
\draw[->] (x0) to node[pos=0.55,above,lab] {$~~~\LN$} (xin1);
\draw[->] (y0) to node[pos=0.63,below,lab] {$\mu^{\mathrm{att}}_\ell$} (xin1);
\node[block] (attn) at (2.2,0.25) {$\Attn_\ell$};
\node[plus]  (px1)  at (4.45,1.1) {$+$};
\node[plus]  (py1)  at (4.45,-.5) {$+$};
\node[lnv]   (ln1)  at (5.75,-.5) {$\LN_Y$};
\node[state] (xh)   at (7.1,1.1) {$x_{\ell+\frac12}$};
\node[state] (yh)   at (7.1,-.5) {$y_{\ell+\frac12}$};
\draw[->] (x0) -- (px1);
\draw[->] (y0) -- node[below,lab] {$\zeta_{1,\ell}$} (py1);
\draw[->] (attn.east) to node[pos=0.65,above,lab] {$\hX_{\ell}F_\ell\qquad$} (px1);
\draw[->] (attn.east) to node[pos=0.8,above] {$\hY_{\ell}\zeta_{2,\ell}G_\ell$} (py1);
\draw[->] (px1) -- (xh);
\draw[->] (py1) -- (ln1);
\draw[->] (ln1) -- (yh);
\node[state] (xin2) at (9.35,.25) {$\tilde x_\ell$};
\draw[->] (xh) to node[pos=0.56,above,lab] {$~~~\LN$} (xin2);
\draw[->] (yh) to node[above=2mm,lab] {$m_{\MLP,\ell}$} (xin2);
\node[block] (mlp) at (10.75,.25) {$\MLP_\ell$};
\draw[->] (xin2) -- (mlp.west);
\node[plus]  (py2) at (11.95,-.5) {$+$};
\node[lnv]   (ln2) at (13.25,-.5) {$\LN_Y$};
\node[state] (y1)  at (14.8,-.5) {$y_{\ell+1}$};
\node[plus]  (px2) at (14.8,1.1) {$+$};
\node[state] (x1)  at (15.65,1.1) {$x_{\ell+1}$};
\draw[->] (yh) -- node[below,lab] {$\beta_{\MLP,\ell}$} (py2);
\draw[->] (mlp.east) -- node[right,lab] {$\gamma_{\MLP,\ell} \cdot d_\ell$} (py2);
\draw[->] (py2) -- (ln2);
\draw[->] (ln2) -- (y1);
\draw[->] (xh) -- (px2);
\draw[->] (y1) -- (px2);
\draw[->] (px2) -- (x1);
\end{tikzpicture}}
    \caption{The accelerated attention block followed by an accelerated MLP block as described in \Cref{algo:SympFormer}.}
    \label{fig:attn-MLP}
\end{figure}
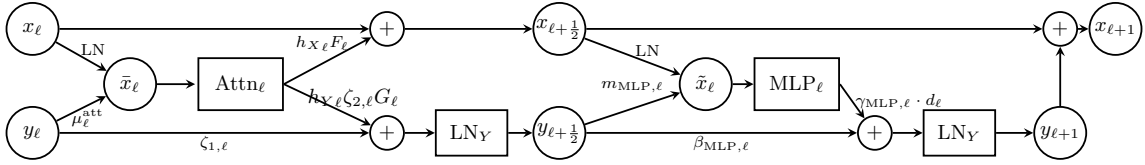

\subsection{Comparisons on the TinyStories data set} 
Both for softmax attention and for linear attention, we compare the vanilla transformer architecture (\enquote{Baseline}) and the Lie-Trotter Nesterov variant of the Yuriiformer architecture \cite{ZPR2026} with a plain Euler forward discretization (\enquote{Plain Euler}), the presymplectic Euler method (\enquote{Presymp Euler}), the presymplectic exponential Euler (\enquote{Presymp ExpEuler}), and the AB-2 scheme \eqref{eq:AB2} applied to \eqref{hamil} (\enquote{Presymp AB2}) and to \eqref{eq:time_dependent_Hamiltonian_system} (\enquote{Presymp ETD-AB2}).

We report numerical results on the \href{https://www.kaggle.com/datasets/thedevastator/tinystories-narrative-classification}{TinyStories} data set, which contains the text from over 2000 short stories in \Cref{tab:tinystories_softmax_1}. 
The exact configurations (number of layers, number of heads, embedding dimension, etc.) are listed in \Cref{section:Implementation}.

\begin{table}[H]
    \centering
    \begin{tabular}{c|c|c|c}
        \textbf{model}       & \textbf{last} ($\downarrow$)      & \textbf{best} ($\downarrow$)     & \textbf{wall clock time} (seconds) ($\downarrow$) \\ \hline
        Baseline    & 2.4687                   & 2.4473                & \textbf{2409.4} \\
        YuriiFormer-Lie-Trotter    & 2.4041                   & 2.3872                & 2835.2 \\
        Plain Euler    & 2.3247                   & 2.3234                & 5036.1 \\
        Presymp Euler    & 2.4728                   & 2.4592                & 4979.0 \\
        Presymp ExpEuler    & 2.3579                   & 2.2523                & 5623.4 \\
        Presymp AB2    & 2.6653                   & 2.6546                & 5249.4 \\
        Presymp ETD-AB2    & \textbf{1.8386}                   & \textbf{1.8386}                & 5798.2 \\
    \end{tabular}
    \caption{Validation loss on the tinystories data set after 10000 optimization steps.}
    \label{tab:tinystories_softmax_1}
\end{table}

\begin{figure}[H]
    \centering
    \includegraphics[width=0.85\linewidth]{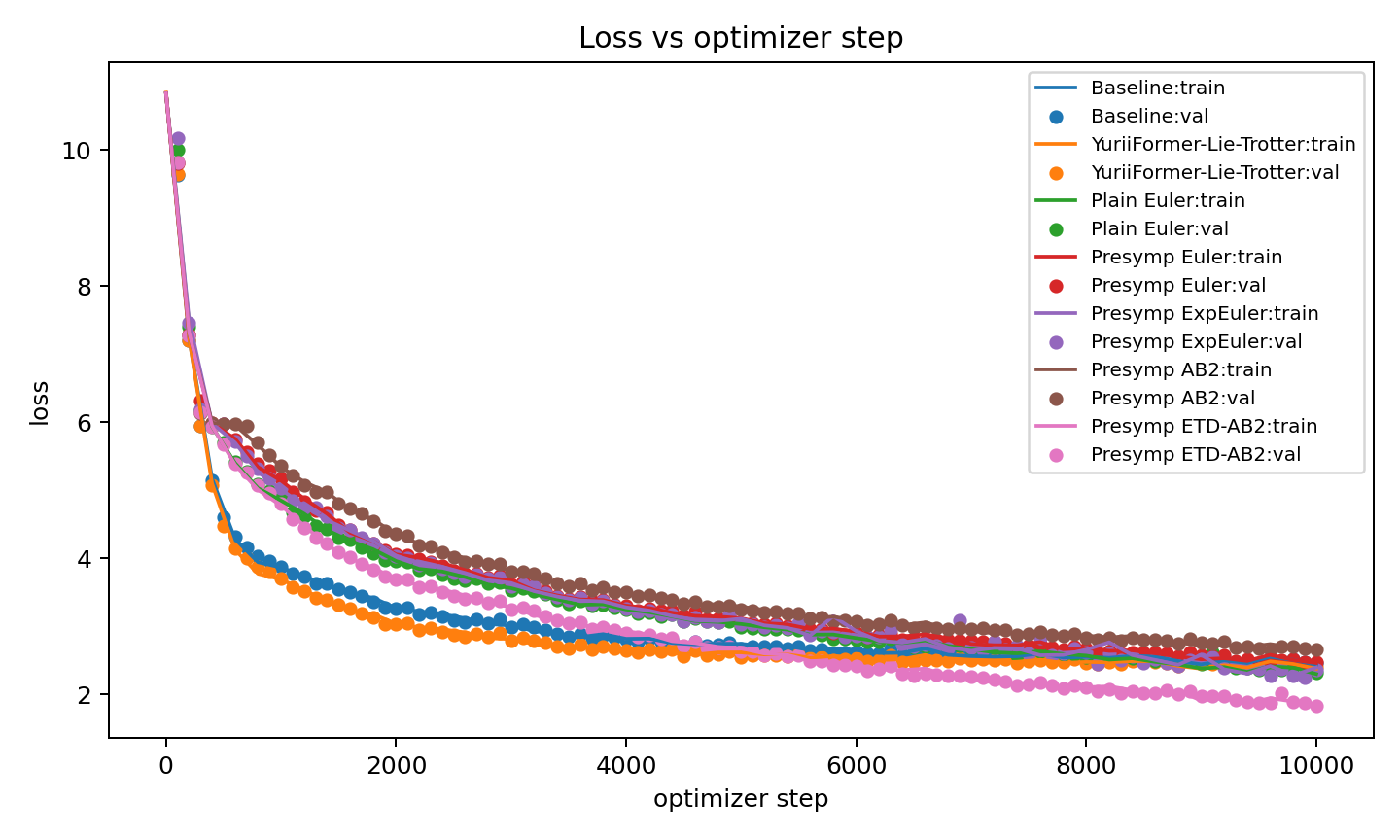}
    \caption{Validation loss (circles) and training loss (lines) on the tinystories data set after 10000 optimization steps, illustrating the results from \Cref{tab:tinystories_softmax_1}.}
    \label{fig:placeholder}
\end{figure}

Lastly, we also report results for the SympFormer with linear attention in \Cref{tab:linear_tinystories1,tab:linear_tinystories2}.
Again, the exact configurations (number of layers, number of heads, embedding dimension, etc.) are listed in \Cref{section:Implementation}.
\begin{table}[H]
    \centering
    \begin{tabular}{c|c|c|c}
        model                   & last ($\downarrow$)      & best ($\downarrow$)     & wall clock time (seconds) ($\downarrow$) \\ \hline
        Baseline                & 3.0830                   & 2.961427                & \textbf{486.7} \\
        YuriiFormer & 2.8783                   & 2.765975                & 749.6 \\
        plain Euler             & 3.0399                   & 2.920602                & 809.3 \\
        Presymp Euler           & \textbf{2.8467}          & \textbf{2.731363}       & 1172.8 \\
    \end{tabular}
    \caption{Validation loss of the SympFormer with linear attention (smaller configuration) on the TinyStories data set after 10000 optimization steps.}
    \label{tab:linear_tinystories1}
\end{table}

\begin{table}[H]
    \centering
    \begin{tabular}{c|c|c|c}
        model                   & last ($\downarrow$)      & best ($\downarrow$)     & wall clock time (seconds) ($\downarrow$) \\ \hline
        Baseline                & 3.0118                   & 2.917168                & \textbf{8956.8} \\
        YuriiFormer & 2.8486                   & 2.765023                & 9601.3 \\
        plain Euler             & \textbf{2.7483}          & \textbf{2.656692}        & 10806.5 \\
        Presymp Euler           & 2.7862                   & 2.690885                & 12620.0 \\
    \end{tabular}
    \caption{Validation loss of the SympFormer with linear attention (larger configuration) on the TinyStories data set after 10000 optimization steps.}
    \label{tab:linear_tinystories2}
\end{table}

\section{Conclusion and Outlook}

In this work, we introduce a new transformer architecture inspired by accelerated optimization algorithms on density manifolds.
Viewing the attention block update as a time and space discretization of a gradient flow in probability density space enabled us to construct the associated \enquote{accelerated} attention PDEs with a \enquote{self-attention-induced Hamiltonian functional}.
After approximating the PDE using particles, we outlined different time discretizations that respect the underlying damped Hamiltonian dynamics.
Our numerical experiments showed that our novel SympFormer architecture outperforms the vanilla transformer architecture on small datasets while requiring the same number of oracle calls.

In future work, we shall explore other forms of self-attention layers by formulating the associated transformer PDE as a gradient flow in a suitable generalized Wasserstein geometry and then applying analogous acceleration procedures.
We intend to put this framework on a rigorous theoretical footing and understand the convergence properties of the proposed algorithms.
Lastly, larger-scale experiments of accelerated attention dynamics and the evaluation on downstream tasks are still needed to further cement the utility of our method.
\\[5mm]

\noindent\textbf{Acknowledgments.}  The paper was started during a visit of W. Li to the
TU Berlin supported by the SPP 2298 \enquote{Theoretical Foundations of Deep Learning} in Summer 2025. 
W. Li's work is supported by the AFOSR YIP award No. FA9550-23-1-0087, NSF RTG: 2038080, NSF DMS-2245097, and the McCausland Faculty Fellow in University of South Carolina. GS acknowledges funding from the DFG Cluster of Excellence MATH+ and from the CNRS as a CNRS fellow ambassador 2026.

\begin{center}
  \FundingLogos
  
  \vspace{0.5em}
  \begin{tcolorbox}\centering\small   
    V.S. acknowledges funding from the European Research Council (ERC) under the European Union’s Horizon Europe research and innovation programme (grant agreement No. 101198055, project acronym NEITALG).
    
    Views and opinions expressed are, however, those of the author(s) only and do not necessarily reflect those of the European Union or the European Research Council Executive Agency. Neither the European Union nor any of the aforementioned granting authorities can be held responsible for them.
  \end{tcolorbox}
\end{center}

\bibliographystyle{abbrv}
\bibliography{Bibliography}

\clearpage
\appendix

\section{Proofs} \label{sec:proofs}

\textbf{Proof of Proposition \ref{lemma:acc_linear}}
It remains to compute 
$\frac{\delta}{\delta \rho} \frac12 \int \Phi (G_\rho^{\lin})^{-1}[\Phi] \d x = \frac{\delta}{\delta \rho} \Psi(\rho)$, where
    \begin{equation*}
        \Psi(\rho) \coloneqq
       \frac{1}{2} \int_{} \Phi(x) \left( - \nabla \cdot \left( \rho(x) \int_{} y^{\T} A x \nabla \Phi(y) \rho(y) \d{y} \right)\right) \d{x}.
    \end{equation*}
    Applying integration by parts, we see that
    \begin{equation*}
        \Psi(\rho)
        = \frac{1}{2} \int_{} \int_{} \langle \nabla \Phi(x), \nabla \Phi(y) \rangle y^{\T} A x \rho(x) \rho(y) \d x \d y.
    \end{equation*}
    This is an interaction energy, whose functional derivative is known to be
    \begin{equation*}
        \Big[\frac{\delta \Psi}{\delta \rho} (\rho) \Big](x)
        = \int_{} \langle \nabla \Phi(x), \nabla \Phi(y) \rangle y^{\T} A x  \, \rho(y)  \d y.\qquad \Box
    \end{equation*}
\hspace{0.5cm}

\noindent
\textbf{Proof of Proposition \ref{lemma:preservation_elliptic}.}
    Let  $\rho_t$,  $t \ge 0$ be produced by \eqref{eq:acc_lin_transformer_PDE}, 
    where $\rho_0 = E(m_0, \Sigma_0, g)$.
    \\[1ex]
    1.    First, we show that the ellipticity is preserved along the flow.  
        The velocity field for the evolution of $\rho_t$ given by
        \begin{equation} \label{eq:velocity_field_u}
            u_t \colon \R^d \to \R^d, \qquad
            x \mapsto \int_{} \nabla \Phi_t(y) y^{\T} \rho_t(y) \d{y} \, A x
            \eqqcolon C_t x
        \end{equation}
        is a linear map.
        By \cite[Lemma~8.1.6]{AGS2008} 
        we have $\rho_t = (X_t)_{\#} \rho_0$ for all $t \ge 0$, where $X_t \colon \R^d \to \R^d$ solves
        \begin{equation} \label{eq:M_evolution}
            \begin{cases}
                \dot{X}_t(x) = C_t X_t(x), & t > 0, \\
                X_0(x) = x.
            \end{cases}        
        \end{equation}
        The solution of this linear ODE is given by $X_t(x) = G_t x$ with
        \begin{equation*}
            \begin{cases}
                \dot{G}_t = C_t G_t, & t > 0, \\
                G_0 = I_d.
            \end{cases}
        \end{equation*}
        Then, $\rho_t = E(G_t m_0, G_t \Sigma_0 G_t^{\T}, g)$. Further,  $m_t = G_t m_0$ and $\Sigma_t = G_t \Sigma_0 G_t^{\T}$ fulfill \eqref{determ}.        
\\[1ex]
2.
        Now, we prove the expression for $\Phi_t$.
        In our notation, the second equation of \eqref{eq:acc_lin_transformer_PDE} becomes
        \begin{equation} \label{eq:Phi_t_reduced}
            \partial_t \Phi_t(x)
            + x^{\T} C_t^{\T} \nabla \Phi_t(x) 
            = - \alpha_t \Phi_t(x) + \frac{1}{2} x^{\T} V x.
        \end{equation}
        Since $\Phi_0 = 0$, the function $\Phi_t$ will be a quadratic function for all $t > 0$ and we can make the ansatz $\Phi_t(x) = \frac{1}{2} x^{\T} P_t x + b_t^{\T} x + c_t$.
        Since $P_0 = 0$ and $V$ is symmetric, $P_t$ is symmetric for all $t > 0$.
        Hence, $\partial_t \Phi_t(x) = \frac{1}{2} x^{\T} \dot{P}_t x + \dot{b}_t^{\T} x + \dot{c}_t$ and $\nabla \Phi_t(x) = P_t x + b_t$, so that \eqref{eq:Phi_t_reduced} becomes
        \begin{equation*}
            \frac{1}{2} x^{\T} \dot{P}_t x + \dot{b}_t^{\T} x + \dot{c}_t
            + x^{\T} C_t^{\T} P_t x + x^{\T} C_t^{\T} b_t
            = - \alpha_t \left( \frac{1}{2} x^{\T} P_t x + b_t^{\T} x + c_t \right)
            + \frac{1}{2} x^{\T} V x.
        \end{equation*}
        Comparing coefficients yields
        \begin{equation*}
            \begin{cases}
                \dot{P}_t + C_t^{\T} P_t + P_t C_t
                = - \alpha_t P_t + V, & P_0 = 0, \\
                \dot{b}_t + C_t^{\T} b_t
                = - \alpha_t b_t, & b_0 = 0, \\
                \dot{c}_t
                = - \alpha_t c_t, & c_0 = 0.
            \end{cases}
        \end{equation*}
        Thus, $b_t \equiv 0$ and $c_t \equiv 0$, so $\Phi_t$ remains a \enquote{purely quadratic} function: $\Phi_t(x) = \frac{1}{2} x^{\T} P_t x$.
\\[1ex]
3.         Now, we can derive the system \eqref{eq:C_t_and_P_t}.
        Given the closed-form expression for $\Phi_t$, we can simplify the velocity field \eqref{eq:velocity_field_u} as
        \begin{equation*}
            C_t
            = \int_{\R^d} \nabla \Phi_t(y) y^{\T} \rho_t(y) \d{y} \, A
            = P_t \int_{\R^d} y y^{\T} \rho_t(y) \d{y} \, A
            = P_t (\kappa_g \Sigma_t + m_t m_t^{\T}) A.
        \qquad \Box
        \end{equation*}         
\hspace{0.5cm}

\noindent
\textbf{Proof of Proposition \ref{prop:add}}
    The first equation follows exactly as in the proof of Proposition \ref{lemma:acc_softmax_particles}.
    We have
    \begin{align*}
        \partial_t \nabla \Phi_t(x)
        = - \alpha_t \nabla \Phi_t(x) - \int_{\R^d} \left(\langle \nabla \Phi_t(x), \nabla \Phi_t(y) \rangle A y + \nabla^2 \Phi_t(x) \nabla \Phi_t(y) x^{\T} A y\right) \rho_t(y) \d y + V x.
    \end{align*}
    Plugging in $x = X_i(t)$ and $\rho_t \leftrightarrow \frac{1}{N} \sum_{j = 1}^{N} \delta_{X_j(t)}$ yields
    \begin{align*}
        \dot{Y}_i(t)
        & = (\partial_t \nabla \Phi_t)(X_i(t)) + \nabla^2 \Phi_t(X_i(t)) \dot{X}_i(t) \\
        & = - \alpha_t Y_i(t) - \frac{1}{N} \sum_{j = 1}^{N} \langle Y_i(t), Y_j(t) \rangle A X_j(t) - \frac{1}{N} \nabla^2 \Phi_t(X_i(t)) \sum_{j = 1}^{N} Y_j(t) X_i(t)^{\T} A X_j(t) \\
        & \qquad + V X_i(t)
        + \nabla^2 \Phi_t(X_i(t)) \frac{1}{N} \sum_{j = 1}^{N} Y_j(t) X_j(t)^{\T} A X_i(t) \\
        & = - \alpha_t Y_i(t) - \frac{1}{N} \sum_{j = 1}^{N} \langle Y_i(t), Y_j(t) \rangle A X_j(t) + V X_i(t).\qquad \Box
    \end{align*}
    \vspace{0.2cm}
    
 \noindent
\textbf{Proof of Proposition \ref{lemma:acc_softmax}}
The first equation follows directly from \eqref{eq}.
\\
Let $\varphi \in C^\infty(\Omega)$ with mass zero such that 
$\rho + t \varphi \in \mathcal P(\Omega)$ for all sufficiently small $t \in \R$.
From the integration by parts, we have 
        \begin{align*}
             &\frac{\d}{\d t} \bigg|_{t = 0} \int_{} \Phi(x) (G^{\text{SM}}_{\rho + t \varphi})^{-1}[\Phi](x) \d x\\
                        &=\int_{} \nabla \Phi(x) \cdot \left( 
            \frac{\d}{\d t} \bigg|_{t = 0} 
            \frac{\rho(x) + t \varphi(x)}{\int_{} \exp\left(z^\T A x\right) (\rho(z) + t \varphi(z)) \d z} V A^{-1} \nabla \Phi(x) \right) \d x 
            \\
            & = \int_{} \nabla \Phi(x) \cdot \bigg( \bigg( \frac{\varphi(x)}{\int \exp\left( z^\T A x\right) \rho(z) \d z}  - \rho(x) \frac{\int \exp\left(z^\T A x\right) \varphi(z) \d z}{\left(\int \exp\left(z^\T A x\right) \rho(z) \d z\right)^2} \bigg) V A^{-1} \nabla \Phi(x) \bigg) \d{x} 
            \\
            & = \int_{} 
            \frac{\langle \nabla\Phi(x), V A^{-1} \nabla \Phi(x) \rangle}{\int_{} \exp\left(z^\T A x\right) \rho(z) \d z} \varphi(x) \d x  \\
            & \quad - \int_{} \int 
            \frac{\exp\left(y^\T A x\right)}{\left(\int \exp\left(z^\T A x\right) \rho(z) \d z\right)^2} 
            \langle \nabla \Phi(x), 
            V A^{-1} \nabla \Phi(x) \rangle \rho(x) \varphi(y) \d x \d y, 
        \end{align*} 
        so that
        \begin{align*}
            & \frac{\delta}{\delta \rho}\left\{\int \Phi(x) (G_{\rho + t \phi}^{\SM})^{-1}[\Phi](x) \d{x}\right\}(x) \\
            & \qquad = \frac{\| \nabla \Phi(x) \|_B^2}{\int \exp\left(z^{\T} A x\right) \rho(z) \d{z}} 
            - \int \frac{\exp\left(x^{\T} A y\right)}{\left( \int \exp\left(z^{\T} A x\right) \rho(z) \d z\right)^2} \| \nabla \Phi(y) \|_B^2 \rho(y) \d{y}.
        \end{align*}
Substituting both expressions together with \eqref{helper_a} into \eqref{sys_1} yields the assertion. \hfill $\Box$        
\medskip

\noindent
\textbf{Proof of Proposition \ref{lemma:acc_softmax_particles}}
Using $Y_i(t) \coloneqq \nabla \Phi_t(X_i(t))$, 
the first equation in \eqref{eq:acc_softmax} becomes
\begin{equation*}
    \dot{X}_i(t)
    = \frac{V A^{-1} Y_i(t)}{\frac{1}{N} \sum_{j = 1}^{N} \exp\left( X_j(t)^{\T} A X_i(t)\right)},
\end{equation*}
see also \cite[Lemma~8.1.6]{AGS2008}.
For the second equation, we use a similar \textit{ansatz} as in \cite[App.~C.1]{SL2026,SL2026b}. First, we compute
\begin{equation} \label{eq:dot_P}
    \dot{Y}_i(t)
    = (\partial_t \nabla \Phi_t)( X_i(t) ) + \nabla^2 \Phi_t(X_i(t)) \dot{X}_i(t).
\end{equation}
The first term on the right-hand side can be rewritten as follows:
Taking the spatial gradient in the second equation of \eqref{eq:acc_softmax} yields (skipping the index for simplicity)
\begin{align*}
    (\partial_t \nabla \Phi_t)(X_t)
    & = - \alpha_t \nabla \Phi_t(X_t)
    - \nabla^2 \Phi_t(X_t) \underbrace{\frac{B \nabla \Phi_t(X_t)}{\int_{} \exp\left( y^\T A X_t \right) \rho_t(y) \d y}}_{= \dot{X}_t} \\
    & \quad + \frac{1}{2} \frac{1}{\left( \int_{} \exp\left(y ^\T A X_t \right) \rho_t(y) \d{y} \right)^2} \| \nabla \Phi_t(X_t) \|_B^2 \int_{} A y \exp\left( y ^\T A X_t\right) \rho_t(y) \d{y} \\
    & \quad + 
    \frac{1}{2} \int_{} \exp\left( y^{\T} A X_t\right) \left( \frac{\| \nabla \Phi_t(y) \|_B^2}{\left( \int_{} \exp\left( z^\T A y\right) \rho_t(z) \d{z} \right)^2} + 2\right) \, A y \, \rho_t(y) \d{y}.
\end{align*}
Plugging this into \eqref{eq:dot_P}, we see that the terms involving $\nabla^2 \Phi_t$ cancel, so we obtain
\begin{align*}
    \dot{Y}_i(t)
    & = - \alpha_t Y_i(t) 
    + \frac{1}{2} \frac{1}{\left( \int_{} \exp\left( y^\T A X_t \right) \rho_t(y) \d{y} \right)^2} \| Y_i(t) \|_B^2 \int_{} A y \exp\left( y^\T A X_t\right) \rho_t(y) \d{y} \\
    & \quad + \frac{1}{2} \int_{} \exp\left( y^\T A X_t  \right) \left( \frac{\| \nabla \Phi_t(y) \|_B^2}{\left( \int_{} \exp\left( z^\T A y \right) \rho_t(z) \d{z} \right)^2} + 2\right) A y \, \rho_t(y) \d{y} \\
    & = - \alpha_t Y_i(t)
    + \frac{N}{2} A \sum_{j = 1}^{N} X_j(t) M_{i, j}(t) \left( \frac{\| Y_i(t) \|_B^2  }{\left( \sum_{j = 1}^{N} M_{i, j}(t)\right)^2} + \frac{\| Y_j(t) \|_B^2}{\left(\sum_{k = 1}^{N} M_{j, k}(t) \right)^2} + 2\right),
\end{align*}
where $M_{i, j}(t) \coloneqq \exp\left( X_j(t)^{\T} A X_i(t)\right)$. \hfill $\Box$

\section{Implementation details} \label{section:Implementation}

The MLP uses the GELU activation function with 4× expansion, zero dropout, no bias terms, and a global grad-norm clip at 1.0.
We train in float16 and use a cosine LR schedule with linear warm-up, with the minimum LR set to 0.1×peak.

The scalar parameters are jointly optimized with the MLP weights using backpropagation and are constrained to the appropriate domain via a softplus or sigmoid parametrization.
They form a separate parameter group for the optimizer, are excluded from gradient clipping, and receive a 5x learning rate boost.

We use the GPT-2 BPE tokenizer and pad the vocabulary to V=50304.
Unlike \cite{ZPR2026}, we use only the AdamW optimizer rather than a mixed AdamW-Muon optimizer.

Next, we list the scalar prefactors $\zeta_1, \zeta_2 > 0$ used in \Cref{algo:SympFormer} by the first-order methods in \Cref{tab:zeta}.
\begin{table}[H]
    \centering
    \begin{tabular}{c|cc}
        architecture    & $\zeta_1$ & $\zeta_2$\\ \hline
        plain Euler     & $\alpha$  & $1$ \\
        Presymp Euler   & $1 - \alpha(t_k) h_Y $    & $1$ \\
        Presymp ExpEuler& $e^{-\Delta \eta_k}$      & $\frac{1 - e^{-\Delta \eta_k}}{\alpha_{\text{eff}}}$
    \end{tabular}
    \caption{Scalar prefactors $\zeta_i$ for the different first-order time discretizations.
    Here, $\alpha \in (0, 1)$ is a freely learned scalar.}
    \label{tab:zeta}
\end{table}

\paragraph{Hyperparameters for each experiment}
For \Cref{tab:tinystories_softmax_1}, we use 8 layers, 8 heads, embedding dimension 64, block size 512, batch size 6, and 16 gradient accumulation steps, evaluation batches of size 20, 150 warm-up steps, a minimum learning-rate ratio of 0.4, and a peak learning rate of $0.0003$.
The learned integrator step size is initialized with $h_0=0.1$.

For the experiments in \Cref{tab:linear_tinystories1}, we use 4 layers, 4 heads, an embedding dimension of 64, a block size of 128, a batch size of 6, 12 gradient accumulation steps, evaluation batches of size 20, 150 warm-up steps, a minimum learning rate ratio of 0.4, and a peak learning rate of 3e-4, and a scalar learning rate multiplier of 100. The time steps $h_k$ are learned and initialized with $h_0=0.25$ for the first-order methods resp. $h_0=0.01$ for the second-order methods.

For the experiments in \Cref{tab:linear_tinystories2}, we use 8 layers, 8 heads, an embedding dimension of 64, a block size of 1024, a batch size of 6, 12 gradient accumulation steps, evaluation batches of size 20, 150 warm-up steps, a minimum learning rate ratio of 0.4, and a peak learning rate of 3e-4.
The learned scalars have a 100 times learning rate multiplier, and we use a look-ahead step for the presymplectic discretizations.
The time steps $h_k$ are learned and initialized with $h_0=0.3$ for the first-order methods resp. $h_0=0.01$ for the second-order methods.

All experiments are implemented in PyTorch and run on NVIDIA Ada Generation RTX 6000 GPUs.

\end{document}